\setlist[enumerate,1]{leftmargin=*,wide=0em, itemsep=0pt,topsep=3pt, label = {\bfseries \arabic*.}}
\setlist[itemize,1]{leftmargin=*,wide=0em, itemsep=0pt,topsep=3pt}
\crefname{equation}{}{}
\crefname{figure}{Figure}{Figure}
\newcommand{\dd}{\mathrm{d}}
\renewcommand{\v}[1]{\boldsymbol{#1}}
\newcommand{\m}[1]{\mathbf{#1}}
\newcommand{\bb}[1]{\mathbb{#1}}
\newcommand{\gvn}{\, \vert \,}
\newcommand{\MMTV}{\mathrm{MMTV}}
\newcommand{\MMD}{\mathrm{MMD}}
\newcommand{\TV}{d_{\mathrm{TV}}}
\newcommand{\dotprod}[1]{\langle #1 \rangle}
\DeclareMathOperator*{\argmin}{arg\,min}
\newcommand{\reals}{\mathbb{R}}
\newcommand{\df}{\mathrm{d}}
\algrenewcommand\algorithmicrequire{\textbf{Input:}}
\algrenewcommand\algorithmicensure{\textbf{Output:}}
\renewcommand{\triangleq}{\coloneqq}
\newcommand{\prox}{\boldsymbol{{\rm prox}}}
\newcommand{\BlackBox}{\rule{1.5ex}{1.5ex}}  
    \renewenvironment{proof}{\par\noindent{\bf Proof\ }}{\hfill\BlackBox\\[2mm]}
    \newenvironment{proof}{\par\noindent{\bf Proof\ }}{\hfill\BlackBox\\[2mm]}
\newtheorem{theorem}{Theorem}
\newtheorem{lemma}[theorem]{Lemma} 
\newtheorem{proposition}[theorem]{Proposition} 
\newtheorem{remark}[theorem]{Remark}
\newtheorem{definition}[theorem]{Definition}
\newtheorem{assumption}{Assumption}
\begin{document}

\title{Implicit Langevin Algorithms \\ for Sampling From Log-concave Densities}

\author{Liam Hodgkinson \\     Department of Statistics, UC Berkeley, Berkeley, CA, 94720, USA \\
       International Computer Science Institute, Berkeley, CA, 94704, USA  \\ 
       \texttt{liam.hodgkinson@berkeley.edu} \and
       Robert Salomone \\
       Centre for Data Science, Queensland University of Technology\\
        Brisbane, QLD, 4001, Australia \\ 
       \texttt{robert.salomone@qut.edu.au} \and
       Fred Roosta \\
       School of Mathematics and Physics, University of Queensland\\ St Lucia QLD 4067, Australia \\
       International Computer Science Institute, Berkeley, CA 94704, US \\ 
       \texttt{fred.roosta@uq.edu.au}
       }

\date{}       
\maketitle

\begin{abstract}%
For sampling from a log-concave density, we study implicit integrators resulting from $\theta$-method discretization of the overdamped Langevin diffusion stochastic differential equation. Theoretical and algorithmic properties of the resulting sampling methods for $ \theta \in [0,1] $ and a range of step sizes are established. Our results generalize and extend prior works in several directions. In particular, for $\theta\ge1/2$, we prove geometric ergodicity and stability of the resulting methods for all step sizes. We show that obtaining subsequent samples amounts to solving a strongly-convex optimization problem, which is readily achievable using one of numerous existing methods. Numerical examples supporting our theoretical analysis are also presented.  \\ 
\end{abstract}

\section{Introduction}
\label{sec:Intro}
Effectively sampling from arbitrary unnormalized probability distributions is a fundamental aspect of the Monte Carlo method, and is central in Bayesian inference. The most common cases involve probability densities $\pi$ with support on all of $\mathbb{R}^d$, which can be written in the unnormalized form as
\begin{align*}
\pi(\v x) \propto \exp(-f(\v x)),\qquad \v x \in \mathbb{R}^d.
\end{align*}
The sampling problem concerns the construction of a set of points $\{\v X_k\}$ whose empirical distribution approaches $\pi$ in some appropriate sense. 
A standard approach is \emph{Markov Chain Monte Carlo} (MCMC), in which approximate sampling from $\pi$ is accomplished by simulating a $\pi$-ergodic Markov chain. By the ergodic theorem, this provides consistent Monte Carlo estimators for expectations involving the density $\pi$. The most popular approach to generate such a set of points is the \emph{Metropolis-Hastings} algorithm \citep{Hastings1970}, which constructs a $\pi$-ergodic Markov chain by generating a proposal from a given transition kernel and implements an acceptance criterion for these proposals; see \citet{robert1999monte} for an overview of such methods. While geometric rates of convergence (geometric ergodicity) can be guaranteed in a wide variety of settings, performance is highly susceptible to the underlying proposal. 
However, the effectiveness of Metropolis-Hastings methods diminish in higher dimensions, as step sizes must be scaled inversely with dimension, making rapid exploration of the space unlikely; see for example, \citet{roberts2001optimal}.

Many of these issues lie with the steadfast requirement of consistency: that 
the sample empirical distribution should \emph{asymptotically} be the same as $\pi$.  Ensuring this requirement in turn can result in incurring serious penalty to the mixing rate of the chain. However, when seeking a fixed (finite) number of samples, which is almost always the case in practice, consistency is not necessarily a decisive property. Therefore, it has recently become popular to consider rapidly converging Markov chains whose stationary distributions are only \emph{approximations} to $\pi$ with a bias of adjustable size \citep{dalalyan2017further,dalalyan2017theoretical,wibisono2018sampling,cheng2018sharp,cheng2018convergence}. While the resulting Monte Carlo estimator is no longer consistent, it will often have dramatically smaller variance. This is an example of a \emph{bias-variance tradeoff}, where a biased method can require significantly less computational effort to reach the same mean-squared error as an asymptotically unbiased Metropolis-Hastings chain \citep{korattikara2014austerity}. 

The most studied of these methods is the \emph{unadjusted Langevin algorithm} (ULA), seen in \citet{roberts1996}, which is constructed by considering the overdamped Langevin diffusion equation, given by the stochastic differential equation (SDE)
\begin{align} 
\label{eq:overdamped_Langevin}
\v L_0 \sim \pi_0,  \qquad \dd \v L_{t} = -\frac{1}{2}\nabla f(\v L_{t}) \dd t+ \dd \v W_{t}, 
\end{align}
and employing the forward Euler integrator, also known as Euler--Maruyama approximation \citep{kloeden2013numerical}, to obtain iterates of the form
\begin{align}
\label{eq:overdamped_Langevin_FE}
\v X_{k+1} = \v X_k - \frac{h}{2} \nabla f(\v X_k) + \sqrt{h} \v Z_k.
\end{align}
Here, $\v W_{t}$ denotes $d$-dimensional standard Brownian motion, $\pi_0$ is some arbitrary (possibly deterministic) initial distribution, each $\v Z_k$ is an independent standard $d$-dimensional normal random vector, and $h$ is the \emph{step size} parameter representing the temporal mesh size of the Euler method. Since \cref{eq:overdamped_Langevin_FE} is explicitly defined, it is often referred to as \emph{explicit} Euler scheme.
This main interest in \cref{eq:overdamped_Langevin} lies in the well known fact that, under certain mild conditions and regardless of $\pi_0$, for any $t > 0$ the distribution of $\v L_t$ is absolutely continuous (so we may consider its density $\pi_t$ on $\mathbb{R}^d$), and $\v L_t$ is an ergodic Markov process with limiting distribution $\pi$, that is, $\pi_t(\v x) \to \pi(\v x)$ as $t \to \infty$ for all $\v x \in \mathbb{R}^d$ \citep{kolmogorov1937}.  

However, unlike the Langevin SDE, the distribution of samples obtained from ULA \cref{eq:overdamped_Langevin_FE} will, generally speaking, not converge to $\pi$ as $t\to\infty$. More precisely, ULA is an asymptotically \emph{biased} sampling algorithm, with corresponding bias proportional to step size (temporal mesh size). Despite this, in situations where MCMC fails to perform well, for example, high-dimensional problems, ULA can provide approximate samples from the target density with acceptable accuracy \citep{Durmus2016}.

The theoretical properties of ULA, including geometric ergodicity \citep{hansen2003geometric,roberts1996}, and performance in high dimensions \citep{Durmus2016} are well understood. Of particular relevance to us is the recent work of \citet{dalalyan2017further}, \citet{dalalyan2017theoretical}, and \citet{durmus2017nonasymptotic}, 
concerning the stability of ULA.
Although it does not possess a single technical definition, stability of stochastic
processes is often well-understood conceptually---some common characterizations include
non-evanescence and Harris recurrence \citep[p.\ 15]{meyn2012markov}. To establish stability, the aforementioned works develop theoretical guarantees in the form of error bounds on the 2-Wasserstein metric between iterates of ULA and the target distribution. Doing so gives conditions under which ULA is bounded in probability, which in turn implies non-evanescence \citep[Proposition 12.1.1]{meyn2012markov}, and Harris recurrence, of the corresponding Markov chain \citep[Theorem 9.2.2]{meyn2012markov}. 
The inexact case, where $\nabla f$ is approximated to within an absolute tolerance, is also considered \citep{dalalyan2017user}. Some alternative unadjusted explicit methods have also been considered; these are usually derived using other diffusions whose stationary distributions can also be prescribed \citep{cheng2017underdamped}.

As a direct result of the explicit nature of the underlying discretization scheme, the main issue with ULA-type algorithms is that they are stable only up to a fixed step size, beyond which the chain is no longer ergodic. 
In fact, \citet{roberts1996} actively discourage the use of ULA for this reason, and show that ULA may be transient for large step sizes.  Stability is an essential concept when designing and analyzing methods for the numerical integration of continuous-time differential equations \citep{ascher2008numerical}. In some cases, this step size must be taken extremely small to remain stable. This becomes a major hindrance to the performance of the method in practice. Drawing comparisons to the theory of ordinary differential equations (ODEs) by dropping the stochastic term, in these cases, the Langevin diffusion is said to be \emph{stiff} \citep{ascher1998computer}. Ill-conditioned problems, such as sampling from any multivariate normal distribution with a covariance matrix possessing a large condition number \citep[\S2.6.2]{golub2012matrix},  are likely to induce a stiff Langevin diffusion %
\citep[\S6.2]{lambert1991numerical}. %
The negative side-effects associated with ill-conditioning as well as the restrictions on step size are often only exacerbated in high-dimensional problems. 

In this light, a natural alternative to using explicit schemes with careful choice of step size is to consider \emph{implicit} variants. From the established theory of numerical solutions of ODEs, it is well-known that implicit integrators have larger regions of stability than explicit alternatives, that is, one can take larger steps without unboundedly magnifying the underlying discretization errors \citep{ascher2008numerical}. 
Motivated by this, we can instead consider the $\theta$-method scheme~\citep[p.\ 84]{ascher2008numerical}, which when applied to Langevin dynamics \cref{eq:overdamped_Langevin}, yields general iterations of form 
\begin{align}
\label{eq:overdamped_Langevin_theta_method_0}
\v X_{k+1} = \v X_k - \frac{h}{2}\big[\theta \nabla f(\v X_{k+1}) + (1-\theta) \nabla f(\v X_k) \big] + \sqrt{h} \v Z_k,
\end{align}
for some $ \theta = [0,1] $. 
The special cases of $\theta = 0, 1$ and $1/2$ correspond to forward, backward, and trapezoidal integrators, respectively. Of course, for $\theta = 0$, \cref{eq:overdamped_Langevin_theta_method_0} reduces to the explicit Euler scheme \cref{eq:overdamped_Langevin_FE}. As the choice of $\theta \in (0,1]$ define the endpoint in an implicit way, such integrators are often referred to as {\em implicit}. 

To our knowledge, there have only been a handful of efforts to study the properties of sampling algorithms obtained from such implicit schemes. A universal analysis of sampling schemes based on general Langevin diffusion was conducted in \citet{mattingly2002ergodicity}. There, it was shown that the implicit Euler scheme, and other numerical methods satisfying a certain minorization condition are geometrically ergodic for sufficiently small step sizes under the assumption that $f$ is `essentially quadratic'. In a more focused analysis, \citet{Casella2011} investigated the ergodic properties of a few implicit schemes (including the $\theta$-method scheme) 
for a restricted family of one-dimensional super-Gaussian target densities.
They found that, in this setting, the $\theta$-method results in a geometrically ergodic chain for \emph{any} step size $h > 0$, provided that $\theta \geq 1/2$, and suggested the same might be true in higher dimensions. Under slightly weaker assumptions than strong convexity, \citet{kopec2014weak} conducted a weak backward error analysis providing error bounds on the expectation of the fully implicit Euler scheme ($\theta = 1$) with respect to suitable test functions. More recently, \citet{wibisono2018sampling} considered the $\theta=1/2$ case and provided a rate of convergence of the scheme towards its biased stationary distribution under the 2-Wasserstein metric, assuming strong convexity and small step sizes. Despite these efforts, it is still unclear how implicit schemes compare with explicit schemes more generally for large step sizes, and what the effect of $\theta$ is on the bias of the method.

The aim of this work is to study the $\theta$-method sampling scheme \cref{eq:overdamped_Langevin_theta_method_0} for all $\theta \in (0,1]$, as it applies to the relevant case of strongly log-concave distributions{{, particularly when $f$ is a strongly convex function and $\nabla f$ is Lipschitz continuous.}}
Such distributions arise frequently in Bayesian regression problems \citep{bishop2003bayesian}, for example  generalized linear models (GLMs) with a Gaussian prior \citep{chatfield2010introduction}. 

\subsection{Contributions} To those ends, the contributions of this work are as follows:
\begin{enumerate}

\item We show that the transition density associated with \cref{eq:overdamped_Langevin_theta_method_0} has a closed form solution. Then, using this, we establish conditions for geometric ergodicity, in terms of $\theta$, the step size $h$, Lipschitz continuity, tail behaviour, and semi-convexity of $f$ (Theorem \ref{thm:GeoErgodic}). By doing so, we show stability of the $\theta$-method scheme in multivariate settings for \emph{any} step size when $\theta \geq 1/2$ and $f$ is strongly convex, proving the conjecture by \citet{Casella2011}.

\item We provide non-asymptotic theoretical guarantees for long-time behaviour of \cref{eq:overdamped_Langevin_theta_method_0}, which extend those of \citet[Theorem 1]{dalalyan2017user} to the general implicit case (Theorem \ref{thm:MidGuarantee}). 

\item As for $ \theta > 0 $, iterations of \cref{eq:overdamped_Langevin_theta_method_0} involve solving a non-linear equation, we study the effect of inexact solutions of the underlying sub-problems. We propose practically computable termination criteria and quantify the effect of approximating each iterate on the convergence rate and long-term bias of the chain with this {{criterion}}. 

\item We establish large step size asymptotics for $\theta > 0$ via a central limit theorem as $h \to \infty$ (Theorem \ref{thm:GaussApprox}). As a consequence,
we develop an effective default heuristic choice of step size.

\item Finally, we demonstrate the empirical performance of the implicit $\theta$-method scheme in a series of numerical experiments; namely sampling from high-dimensional Gaussian distributions, and the posterior density of a Bayesian logistic regression problem involving a real data set.
\end{enumerate}
Proofs of all results can be found in \cref{sec:proof}.

\paragraph{Notation.}
In the sequel, vectors and matrices are denoted by bold lowercase and Romanized bold uppercase letters,  
for example, $\v v$ and $\m V$, respectively. We denote the identity matrix by $\m I$. Regular lower-case and upper-case letters, such as s $ m $  and $ M $, are used to denote scalar constants. Random vectors are denoted by italicized bold uppercase letters, such as $\v X$.
For two symmetric matrices $\m A$ and $\m B$, $\m A \succeq \m B$ indicates that $\m A-\m B$ is symmetric positive semi-definite. For vectors, we let $\|\cdot\|$ denote the Euclidean norm of appropriate
dimension, and $\|\cdot\|_{L^2}$ denote the $L^2$ norm acting on random vectors, that is, $\|\v X\|_{L^2}^2 \triangleq \mathbb{E}\|\v X\|^2$. For matrices, $ \|\,\cdot\,\|_{2} $ denotes the spectral norm.

\section{Implicit Langevin Algorithm (ILA)}
\label{sec:SILA}
In this section,  we establish conditions under which the sequence of $\theta$-method iterates \cref{eq:overdamped_Langevin_theta_method_0}  form a Markov chain that is geometrically ergodic. For this, we impose the following assumption on the smoothness of $ f $, which ensures the existence of a unique solution to \cref{eq:overdamped_Langevin}; see \citet[Theorem 2.4--3.1]{ikeda2014stochastic}. 

\begin{assumption}
\label{ass:Lipschitz}
The function $f \in \mathcal{C}^2$ (it is twice continuously differentiable), and $\nabla f$ is $M$-Lipschitz, that is,
\begin{align*}
\|\nabla f(\v x) - \nabla f(\v y)\|\leq M\|\v x - \v y\|,\qquad \mbox{for any }\v x, \v y \in \mathbb{R}^d.
\end{align*}
\end{assumption}

Under \cref{ass:Lipschitz}, \citet{dalalyan2017theoretical} shows that if $h < 4/M$, iterations of ULA \cref{eq:overdamped_Langevin_FE} are {{bounded in probability (in other words, the algorithm is \emph{stable})}}. 
However, this restriction is a fundamental disadvantage of ULA. If $M$ is particularly large, as might be the case in ill-conditioned problems and in high dimensions where the ULA is commonly applied, then the step size must be taken very small, which results in slow mixing time of the chain and high autocorrelation of the samples. 
In sharp contrast, we now show that for appropriate choice of $\theta$, \cref{eq:overdamped_Langevin_theta_method_0} does not suffer from this restriction.

The process of obtaining samples by iterating \cref{eq:overdamped_Langevin_theta_method_0} is outlined in \cref{alg:sila_exact}. For brevity, we henceforth refer to this procedure as the implicit Langevin algorithm (ILA). Note that (\ref{eq:overdamped_Langevin_theta_method_0}) can be rewritten as
\[({\cal I} + \tfrac{1}{2}h\theta  \nabla f) (\v X_{k+1}) = \v X_k - \tfrac{1}{2}h (1-\theta)\nabla f(\v X_k) + \sqrt{h}\v Z_k,\]
where ${\cal I}$ denotes the identity mapping. 
Assuming that {{${\cal I} + \frac12 h \theta \nabla f$ is a strictly monotone operator, that is ${\cal I} + \frac12 h \theta \nabla f$ is globally invertible, \cref{eq:overdamped_Langevin_theta_method_0} admits a unique solution as }}
\begin{equation}
\label{eq:overdamped_Langevin_theta_inter}
\v X_{k+1} =  ({\cal I} + \tfrac{1}{2}h\theta  \nabla f )^{-1} \left[\v X_k - \tfrac{1}{2}h(1-\theta)\nabla f(\v X_k) + \sqrt{h}\v Z_k\right].
\end{equation} 
Conditions under which the procedure \cref{eq:overdamped_Langevin_theta_inter} is guaranteed to be well-defined are discussed in \S\ref{sec:overdamped_Langevin_theta_method_metropolis}{{; see \citet{rockafellar1976monotone} for a thorough treatment of monotone operators and their application in proximal point algorithms.}}
For the time being, it is also assumed in Algorithm \ref{alg:sila_exact} that (\ref{eq:overdamped_Langevin_theta_method_0}) can be solved \emph{exactly}. The discussion of inexact solutions is relegated to \S\ref{sec:Proximal}. 

\begin{algorithm}[htbp]
	\SetAlgoLined
	\SetKwInOut{Input}{Input}\SetKwInOut{Output}{output}
	\Input{- Initial value $\v X_{0} = \v x_0 \in \mathbb{R}^d$  \\
		- Number of samples $n$ \\
		- Step size $h > 0$ \\
		- $\theta$-method parameter $\theta \in (0,1]$}
	\For{$k=0,1,\ldots,n$}{
		Draw $\v Z_k \sim \mathcal{N}(0,\m I)$ \\
		Solve \cref{eq:overdamped_Langevin_theta_method_0} to obtain $\v X_{k+1}$
	}
	\caption{Implicit Langevin Algorithm (ILA)}
	\label{alg:sila_exact}
\end{algorithm}

\subsection{Theoretical analysis of Algorithm \ref{alg:sila_exact}}
\label{sec:overdamped_Langevin_theta_method_metropolis}
In this section, we establish sufficient conditions for the geometric ergodicity of the sequence of iterates generated from \cref{alg:sila_exact}.
To conduct such an analysis, we require the transition kernel density $p(\v y \gvn \v x) $ induced from~\eqref{eq:overdamped_Langevin_theta_method_0}. In general, this is only implicitly defined, however, assuming ${\cal I}+\frac12 h \theta \nabla f$ is globally invertible,
$p(\v y \gvn \v x) $ is nevertheless available in closed form. 
Assuming that $f \in \mathcal{C}^2$, 
this is true whenever $h$ is chosen so that
\begin{equation}
\label{eq:overdamped_Langevin_cond_h}
\m I + \frac{h\theta}{2}\nabla^{2} f(\v x) \succ 0, \qquad \mbox{for all } \v x \in \mathbb{R}^d.
\end{equation}
Therefore, at the very least, for (\ref{eq:overdamped_Langevin_theta_method_0}) to be well-defined as a sampling method, we require $f$ to be {\em semi-convex}, that is, there exists some $\gamma > 0$ such that $\nabla^2 f(\v x) + \gamma \m I$ is positive-semidefinite for all $\v x \in \mathbb{R}^d$. For example, under Assumption \ref{ass:Lipschitz}, $f$ is $M$-semi-convex and \eqref{eq:overdamped_Langevin_cond_h} holds if $h < \frac{2}{\theta M}$. This restriction on step size can be removed entirely if $f$ is assumed to be convex.

From (\ref{eq:overdamped_Langevin_theta_inter}), for fixed $\v x \in \mathbb{R}^d$, note that $p(\, \cdot \, \vert \, \v x)$ is the probability density function of the random variable
\[
\v Y = (\mathcal{I} + \tfrac12 h \theta \nabla f)^{-1} (\v x - \tfrac12 h(1-\theta) \nabla f(\v x) + \sqrt{h} \v Z),
\]
where $\v Z \sim \mathcal{N}(\v 0, \m I)$. As this is  an invertible transformation of a standard Gaussian random vector, by the change of variables theorem (see for example,~\citet[Proposition 1.8]{shao2008mathematical}), we have 
\begin{equation}
\label{eq:overdamped_kernel}
	p(\v y \mid \v x)  = \left|\det\left(\m I + \frac{h \theta}{2} \nabla^{2} f(\v y)\right)\right| \phi\left(\v y + \frac{h \theta}{2} \nabla f(\v y) \, ; \, \v x - \frac{h(1-\theta)}{2} \nabla f(\v x), \;  h \m I \right),
\end{equation}
where $\phi(\, \cdot \,; \v \mu,  \v\Sigma )$ is the density of a multivariate Gaussian distribution with mean $ \v \mu  $ and covariance matrix $ \v\Sigma $,  and `$ \det $' denotes the determinant of a matrix. 

It can be seen from (\ref{eq:overdamped_kernel}) that increasing $\theta$ (or $h$ when $\theta > 0$) alters the landscape of the transition density, and the shape of its level-sets. To illustrate this, \cref{fig:transition_kernel} depicts the contour plots of the transition kernel \cref{eq:overdamped_kernel} for an anisotropic example problem with differing $\theta$ and initial state for the same step size. It can be seen that the case with $\theta = 0$ (ULA) results in an isotropic proposal in all situations, whereas other choices of $\theta$ (implicit methods) yield proposal densities that can better adapt to the anisotropic target density. 

\begin{figure}[t]
\includegraphics[width=\textwidth]{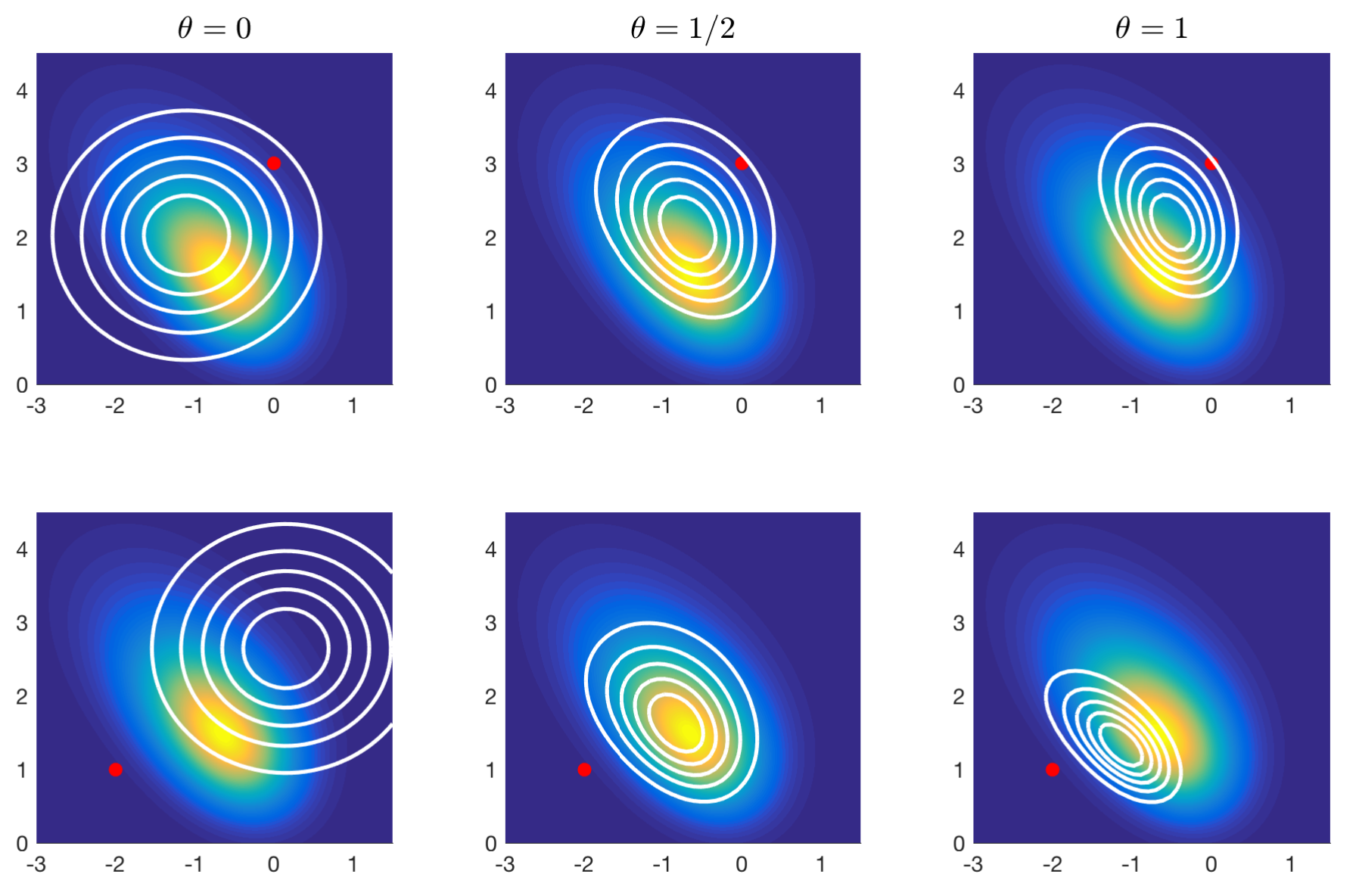}
\caption{Transition kernel for an example density and large step size on a two--dimensional Bayesian Logistic Regression example for large $h$. While the traditional $\theta = 0$ case (ULA) imposes isotropic proposals, for other choices of $\theta$, the proposal density adapts to the anisotropic target density.
\label{fig:transition_kernel}}
\end{figure}

With an explicit expression for the transition density, we can investigate the stability
of the iterates given by \cref{alg:sila_exact}. The most
convenient way of doing this is by demonstrating geometric ergodicity of the chain induced by the transition kernel (\ref{eq:overdamped_kernel}). Recall that a Markov chain with $n$-fold transition kernel $p_n(\v y \vert \v x)$ is said to be \emph{geometrically ergodic} toward an invariant density $\nu(\cdot)$ if there exist constants $C > 0$ and $0 < \rho < 1$ such that
\begin{align*}
\sup_{\v x \in \mathbb{R}^d} \int_{\mathbb{R}^d} |p_n(\v y \gvn \v x) - \nu(\v y)|\dd \v y \leq C \rho^n,\qquad \mbox{for all }n=1,2,\dots.
\end{align*}
Similarly, a diffusion process $\v X_t$ with transition kernel density $p_t(\cdot \, \vert \,  \cdot)$ is said to be \emph{exponentially ergodic} towards an invariant density $\nu(\v x)$ if there exist constants $C, \lambda > 0$ such that
\begin{align*}
\sup_{\v x \in \mathbb{R}^d} \int_{\mathbb{R}^d} |p_t(\v y \gvn \v x) - \nu(\v y)|\dd \v y \leq C e^{-\lambda t},\qquad \mbox{for all }t > 0.
\end{align*}
It was shown in~\citet[Eqn.\ (12)]{hansen2003geometric} that the overdamped Langevin diffusion (\ref{eq:overdamped_Langevin}) is exponentially ergodic provided the following assumption on $f$ holds:
\begin{assumption}
\label{ass:SecantCond}
$m \coloneqq \liminf_{\|\v x\|\to\infty} \displaystyle\frac{\langle\nabla f(\v x), \v x\rangle}{\|\v x\|^2} > 0.$
\end{assumption}
Intuitively, \cref{ass:SecantCond} imposes super-Gaussian tails of the target distribution. 
Under Assumptions \ref{ass:Lipschitz} and \ref{ass:SecantCond}, for any $ \v x $ and $ \v y $, there exists a constant $c(\v y) \geq 0$ depending on $ \v y $, such that (Lemma \ref{lem:AssumpConnection})
\begin{equation}
\label{eq:SecantCondConseq}
\langle \nabla f(\v x) - \nabla f(\v y), \, \v x - \v y \rangle \geq m \|\v x - \v y\|^2 - c(\v y), \qquad \mbox{for every } \v x \in \mathbb{R}^d.
\end{equation}
\cref{ass:SecantCond} is not new, having also appeared in \citet{kopec2014weak}, and appears
to be among the weakest assumptions one can make to effectively study these implicit schemes. Clearly, \cref{eq:SecantCondConseq} is a significantly weaker condition than strong convexity:
\begin{assumption}
\label{ass:Convexity}
The function $f \in \mathcal{C}^2(\mathbb{R}^d)$ is $m$-strongly convex, that is, there exists $ 0< m < \infty $ such that 
\[
\langle \nabla f(\v x) - \nabla f(\v y), \v x - \v y\rangle \geq m \|\v x - \v y\|^2,\qquad \mbox{for any }\v x,\v y\in\mathbb{R}^d.
\]
\end{assumption}

{{In fact, it is straightforward to show that if Assumption \ref{ass:SecantCond} holds, then $m$ is necessarily common in the two inequalities.}}
Furthermore, we remark that under Assumptions \ref{ass:Lipschitz} and \ref{ass:Convexity}, the spectrum of every Hessian matrix $\nabla^2 f(\v x)$ is controlled to be within $[m,M] \subset (0,\infty)$. 
Strong convexity is quite a natural assumption in Bayesian regression problems, as it can be guaranteed for the class of GLMs with Gaussian priors \citep{dasgupta2011probability}. 

Under Assumptions \ref{ass:Lipschitz} and \ref{ass:SecantCond}, we can now establish geometric ergodicity of the $\theta$-method scheme under certain conditions on $\theta$ and $h$ (Theorem \ref{thm:GeoErgodic}). 
\begin{theorem}
\label{thm:GeoErgodic}
For $f$ satisfying Assumptions \ref{ass:Lipschitz} and \ref{ass:SecantCond} that is $\gamma$-semi-convex, the iterates of the $\theta$-method scheme with associated transition kernel (\ref{eq:overdamped_kernel}) form a geometrically ergodic chain when $h < \frac{2}{\theta\gamma}$ provided we also have either
$\theta \geq 1/2$, or both $\theta < 1/2$ and $h < 4 m / [M^2 (1 - 2\theta)]$.
\end{theorem}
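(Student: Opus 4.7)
The plan is to establish geometric ergodicity by combining a Foster--Lyapunov drift condition with a minorization on compact sets, then invoking the classical Meyn--Tweedie criterion. The hypothesis $h < 2/(\theta\gamma)$ is exactly what forces $\mathcal{I} + \tfrac12 h\theta\,\nabla f$ to be a globally invertible $\mathcal{C}^1$ map (its Jacobian $\m I + \tfrac12 h\theta\nabla^2 f$ is uniformly positive definite), so the density in~\eqref{eq:overdamped_kernel} is well defined, continuous, and strictly positive on $\mathbb{R}^d\times\mathbb{R}^d$. A continuous strictly positive kernel is bounded below on any compact rectangle, which immediately yields $\psi$-irreducibility, aperiodicity, and the small-set property for every compact set (with Lebesgue measure as the minorizing measure). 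Only the drift condition remains.

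For the drift I would take the ``matched'' Lyapunov function
\[
V(\v x) \;=\; 1 + \|\v x\|^2 + h\theta\,\langle \v x,\nabla f(\v x)\rangle + \tfrac{h^2\theta^2}{4}\|\nabla f(\v x)\|^2 .
\]
Rewriting \eqref{eq:overdamped_Langevin_theta_method_0} as $\v X_{k+1}+\tfrac12 h\theta\,\nabla f(\v X_{k+1}) = \v X_k - \tfrac12 h(1-\theta)\nabla f(\v X_k) + \sqrt{h}\,\v Z_k$, squaring both sides, and taking $\mathbb{E}[\,\cdot\mid\v X_k]$ (the term linear in $\v Z_k$ has mean zero and $\mathbb{E}\|\v Z_k\|^2 = d$), the $\theta$ and $(1-\theta)$ contributions combine via $(1-\theta)^2 - \theta^2 = 1-2\theta$ to yield the key identity
\[
\mathbb{E}\bigl[V(\v X_{k+1})\mid\v X_k\bigr] - V(\v X_k) \;=\; -h\,\langle\v X_k,\nabla f(\v X_k)\rangle + \tfrac{h^2(1-2\theta)}{4}\|\nabla f(\v X_k)\|^2 + hd .
\]
Assumption~\ref{ass:SecantCond} gives $\langle\v x,\nabla f(\v x)\rangle\geq(m-\epsilon)\|\v x\|^2$ for $\|\v x\|$ sufficiently large, and Assumption~\ref{ass:Lipschitz} gives $\|\nabla f(\v x)\|^2\leq(M^2+\epsilon)\|\v x\|^2$ for $\|\v x\|$ sufficiently large. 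When $\theta\geq 1/2$ the coefficient $1-2\theta$ is nonpositive and the drift is bounded by $-h(m-\epsilon)\|\v x\|^2+hd$, which is negative at infinity for every admissible $h$. When $\theta<1/2$ the drift is bounded by $-\{h(m-\epsilon)-\tfrac{h^2(1-2\theta)(M^2+\epsilon)}{4}\}\|\v x\|^2+hd$, and letting $\epsilon\downarrow 0$ the bracketed coefficient is positive precisely when $h<4m/[(1-2\theta)M^2]$.

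Since $V(\v x)\asymp\|\v x\|^2$ as $\|\v x\|\to\infty$ (lower bound from Assumption~\ref{ass:SecantCond}, upper bound from Assumption~\ref{ass:Lipschitz}), the $-c\|\v x\|^2$ drift converts to a standard geometric drift $\mathbb{E}[V(\v X_{k+1})\mid\v X_k]\leq\rho\,V(\v X_k) + b\,\mathbf{1}_C(\v X_k)$ with $\rho\in(0,1)$ outside a large enough ball $C$; combined with the minorization, the Meyn--Tweedie theorem delivers geometric ergodicity. The main technical point I expect to spend effort on is justifying the choice of $V$: a naive $V(\v x)=\|\v x\|^2$ forces a rearrangement of the form $\|\v X_{k+1}\|^2(1+h\theta m)\leq\cdots$ and loses a factor of two in the step-size threshold. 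It is the ``matched'' corrections $h\theta\langle\v x,\nabla f(\v x)\rangle$ and $\tfrac14 h^2\theta^2\|\nabla f(\v x)\|^2$, which mirror exactly the left-hand side of the squared implicit relation, that produce the clean $(1-2\theta)$ cancellation and hence the sharp thresholds stated in the theorem.
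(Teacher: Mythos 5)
Your proposal is correct and rests on the same core idea as the paper's proof: choose a ``matched'' drift functional built from $\psi(\v x) = \v x - \v x^{\ast} + \tfrac12 h\theta\nabla f(\v x)$, so that the squared implicit relation $\psi(\v X_{k+1}) = \v X_k - \tfrac12 h(1-\theta)\nabla f(\v X_k) + \sqrt h\,\v Z_k$ produces the $(1-\theta)^2 - \theta^2 = 1-2\theta$ cancellation after conditioning on $\v X_k$. This is precisely what drives the paper's bound on $T_1(\v x)^2 - T_2(\v x)^2$ in \eqref{eq:SquareDiff}, and both arguments reduce the drift verification to the same tail estimates from Assumptions \ref{ass:Lipschitz} and \ref{ass:SecantCond}, recovering the identical step-size thresholds. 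The genuine difference is the Lyapunov function itself: the paper uses the exponential form $V_s(\v x) = \exp(s\|\psi(\v x)\|)$ and consequently needs the Gaussian concentration inequality plus Jensen to bound $\mathbb{E}\exp(sg(\v Z))$, whereas your polynomial $V(\v x) = 1 + \|\psi(\v x)\|^2$ admits an exact algebraic expression for $\mathbb{E}[V(\v X_{k+1})\mid\v X_k]$ with no probabilistic inequality needed. The paper's exponential variant is slightly stronger in that it yields $V_s$-uniform ergodicity controlling exponentially growing test functions; your polynomial version yields geometric ergodicity directly, which is all the theorem claims, and is somewhat more elementary. One small point worth being explicit about in a write-up: Assumption \ref{ass:SecantCond} implies $\langle\nabla f(\v x),\v x\rangle>0$ for $\|\v x\|$ large, hence a minimizer exists and $\|\nabla f(\v x)\|\le M\|\v x - \v x^*\|$; one should therefore either translate so $\v x^*=\v 0$ (as the assumption tacitly does) or carry $\v x^*$ through, since $V(\v x)\asymp\|\v x\|^2$ uses both the lower bound from Assumption \ref{ass:SecantCond} and the Lipschitz upper bound.
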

While \cref{thm:GeoErgodic} establishes the geometric ergodicity of the chain towards \emph{some}
stationary distribution, in general, that distribution need not necessarily be $\pi$. Nevertheless, under Assumptions \ref{ass:Lipschitz} and \ref{ass:Convexity}, we have established that the $\theta$-method discretization of the overdamped Langevin equation is stable for \emph{any} step size, provided $\theta \geq 1/2$. For these choices of $\theta$, this implies that ILA is less strict about step size tuning than ULA. As will be seen in \S\ref{sec:Numerics}, this will prove to have a profound effect on the performance of ILA relative to ULA on high-dimensional problems.

\subsection{Asymptotic exactness for the normal distribution}

\label{sec:SecondOrder}
Among all values for $\theta$, we draw special attention to the choice $\theta = 1/2$. This resulting integrator, also known as the trapezoidal scheme, is known to be {\em second-order accurate} when applied to ODEs; that is, for a quadratic function $F$, iterates of the trapezoidal scheme for solving $y' = F(y)$ yield points of the {\em exact} solution \citep[\S12.4]{suli2003introduction}. An important consequence of this is that the global error incurred in the trapezoidal scheme is $\mathcal{O}(h^2)$ as $h \to 0$. Unfortunately, as a consequence of the It\^{o} calculus, this property does not hold for numerical solutions of stochastic differential equations. {{The construction of second-order schemes which are exact for quadratic $f$ in finite time and exhibit $\mathcal{O}(h^2)$ global error generally require careful treatment of the stochastic term---see for example \citet{anderson2009}, or the Ozaki local linearization scheme \citep{biscay1996local}. However, when $\theta = 1/2$, the notion of second-order accuracy itself carries over to ILA in a rather remarkable way.}}

The case where $f$ is a quadratic form corresponds to sampling from a (multivariate) normal distribution $\mathcal{N}(\v \mu, \v\Sigma)$, where
\begin{equation}
\label{eq:NormalExample}
f(\v x) = \tfrac12 (\v x - \v \mu)^{\top} \v\Sigma^{-1} (\v x - \v \mu).
\end{equation}
It is easy to see that, in this particular setting, \cref{eq:overdamped_Langevin_theta_method_0} becomes explicitly solvable. Indeed, letting $\m Q = \v\Sigma^{-1}$, we see that $\nabla f(\v x) = \m Q (\v x - \v \mu)$, and so
\begin{equation}
\label{eq:NormalExampleIters}
\v X_{k+1} = \left(\m I + \frac{h\theta}{2} \m Q\right)^{-1}\left[\left(\m I - \frac{h(1-\theta)}{2}\m Q\right) (\v X_k - \v \mu) + \sqrt{h} \v Z_k\right] + \v \mu.
\end{equation}
Observe that if $\v X_0$ is chosen to be a fixed value, all of the iterates $\v X_k$ are normally distributed. As a consequence of L\'{e}vy continuity, the stationary distribution of the ILA, if it exists, must also be normally distributed. In particular, due to (\ref{eq:NormalExampleIters}), it must have mean $\v m$ and covariance $\m V$ satisfying
\begin{align*}
\v m - \v \mu &= (\m I + \tfrac12 h \theta \m Q)^{-1}(\m I - \tfrac12 h(1-\theta) \m Q) (\v m - \v \mu)\\
\m V &= (\m I + \tfrac12 h \theta \m Q)^{-2}[(\m I - \tfrac12 h (1-\theta) \m Q)^2 \m V + h \m I].
\end{align*}
Since $\m Q \neq \m 0$, it must be the case that $\v m = \v \mu$. Solving for $\m V$,
the stationary distribution of the ILA is found to be
\[
\mathcal{N}\left(\v \mu, \v\Sigma\left(\m I + \tfrac12 h(\theta-\tfrac12) \m Q\right)^{-1}\right).
\]

Here we encounter the remarkable fact that when $f$ is quadratic and $\theta = 1/2$, regardless of the step size chosen, ILA is \emph{asymptotically unbiased!}
To our knowledge, this was first observed in \citet{wibisono2018sampling}, however, as a consequence of our analysis, we can now deduce that $\theta = 1/2$ is the {\em only} choice of $\theta$ that yields this property. While asymptotic exactness is unlikely to hold for other sampling problems, it suggests that cases involving approximately quadratic $f$ should see near optimal performance when $\theta = 1/2$.

\section{Inexact Implicit Langevin Algorithm (i-ILA)}
\label{sec:Proximal}

It is clear that the utility of ILA is dependent on the solvability of \cref{eq:overdamped_Langevin_theta_method_0}. Fortunately, this is made feasible
by a useful reinterpretation of solutions to (\ref{eq:overdamped_Langevin_theta_method_0}) as those of a corresponding optimization problem.
Indeed, the inverse operator $({\cal I} + \tfrac{1}{2}h\theta  \nabla f )^{-1}$
is quite commonly considered in convex optimization, as it is equivalent to the proximal operator $\prox_{\frac12 h \theta f}$ defined by
\[
\prox_{f}(\v v) = \argmin_{\v x \in \mathbb{R}^d} \left\{ f(\v x) + \frac12 \|\v x - \v v\|^2\right\},\qquad \v v \in \mathbb{R}^d.
\]
This equivalence follows from that of an optimization problem and the root-finding problem for its critical values~\citep[Eqn.\ (3.4)]{parikh2014proximal}. Therefore, (\ref{eq:overdamped_Langevin_theta_method_0}) can be formulated as the following optimization problem (after rescaling by $2 / h$):
\begin{subequations}
\label{eq:overdamped_Langevin_theta_method}
\begin{align}
\label{eq:Subproblem_opt}
\v X_{k+1} &= \argmin_{\v x \in \bb R^d} \; F(\v x;\v X_k,\v Z_k),
\end{align}
where
\begin{align}
\label{eq:Subproblem_F}
F(\v x;\v y,\v z) \triangleq \theta f(\v x) + \frac{1}{h} \left\lVert
\v x - \left(\v y - \frac{h(1-\theta)}{2} \nabla f(\v y) + \sqrt{h} \v z\right) \right\rVert^2.
\end{align}
\end{subequations}
This reinterpretation of (\ref{eq:overdamped_Langevin_theta_method_0}) was also noted in \citet{wibisono2018sampling}, although only the $\theta=1/2$ case was considered. Iterations of the form \eqref{eq:overdamped_Langevin_theta_method} are often referred to as \emph{proximal-point methods} in the optimization literature \citep{combettes2011proximal, parikh2014proximal}. We remark that proximal operators were used in~\citet{Pereyra2016} in the construction of a \emph{proximal unadjusted Langevin algorithm} (P-ULA). In fact, iterates of their P-ULA algorithm would correspond with (\ref{eq:overdamped_Langevin_theta_method}) when $\theta = 1$ and if the Gaussian term $\sqrt{h}\v Z_k$ were to be moved outside the proximal operator. 
As one might expect, this discrepancy has a significant impact on the covariance of each proposal as $h\to\infty$; it will be shown in Theorem~\ref{thm:GaussApprox} that the asymptotic covariance of these proposals is generally anisotropic.

The implementation of Algorithm \ref{alg:sila_exact} now hinges entirely on our ability to solve the subproblem \cref{eq:overdamped_Langevin_theta_method}. For the unadjusted Langevin algorithm where $\theta = 0$, this can be done trivially through a closed form solution.  However, for $ \theta > 0 $, we generally have to resort to an iterative optimization scheme to solve \cref{eq:overdamped_Langevin_theta_method}. 
Thus far, we have assumed that the optimization problem in \cref{eq:overdamped_Langevin_theta_method} can be solved exactly. 
However, more often than not this is infeasible, and one must instead consider the effects of \emph{approximate} solutions of \cref{eq:overdamped_Langevin_theta_method} in the overall convergence of the chain. This results in a sampling variant, which is henceforth referred to as i-ILA (for inexact ILA).

The most natural way of doing this is by measuring the error in the corresponding
root-finding problem (\ref{eq:overdamped_Langevin_theta_method_0}) via the norm
of the gradient of the subproblem \cref{eq:Subproblem_F}, $\|\nabla F\|$.
This is ideal because not only  can it be readily computed in practice, but also the termination criterion of many iterative optimization algorithms is based on this norm falling below a given tolerance; for example, see \citet{nocedal2006numerical}. 
Furthermore, efficient algorithms for directly minimizing $ \|\nabla F\| $, as a surrogate function for optimization of $ F $, have been recently proposed, which enjoy linear, that is, geometric, convergence rates, even in the absence of smoothness or convexity of $ F $ \citep{roosta2018newton}. In addition, for sampling in distributed computational environments, such as when large-scale data cannot be stored on a single machine, distributed variants of these surrogate optimization algorithms have also been recently considered \citep{crane2019dingo}. These algorithms are particularly suitable as part of i-ILA since they are guaranteed to rapidly and monotonically decrease $ \|\nabla F\| $; recall that $ \|\nabla F\| $ need not be monotonically decreasing in optimization algorithms that optimize $ F $ directly. 

With this in mind, we consider an inexact modification of Algorithm \ref{alg:sila_exact}, shown in Algorithm \ref{alg:sila_inexact}, for approximate sampling from $\pi$.

\begin{algorithm}[htbp]
	\SetAlgoLined
	\SetKwInOut{Input}{Input}\SetKwInOut{Output}{output}
	\Input{- Initial value $\v X_{0} = \v x_0 \in \mathbb{R}^d$  \\
		- Number of samples $n$ \\
		- Step size $h > 0$ \\
		- $\theta$-method parameter $\theta \in [0,1]$ \\
		- Sub-problem inexactness tolerance $\epsilon \geq 0$}

	\For{$k=0,1,\ldots,n$}{
		Draw $\v Z_k \sim \mathcal{N}(0,\m I)$ \\
		Find $\v X_{k+1}$ satisfying $\|\nabla F(\v X_{k+1};\v X_k, \v Z_k)\| \leq \epsilon$, where $F$ is defined in \eqref{eq:Subproblem_F}
	}
	\caption{Inexact Implicit Langevin Algorithm (i-ILA)}
	\label{alg:sila_inexact}
\end{algorithm}

\subsection{Theoretical analysis of \cref{alg:sila_inexact}}
\label{sec:UserFriendly}
The increased stability offered by \cref{alg:sila_exact} has been established in \cref{thm:GeoErgodic}. However, while \cref{thm:GeoErgodic} guarantees rapid convergence towards \emph{some} stationary distribution, closeness of the $\theta$-method iterates to the target distribution $\pi$ and the effect of increasing $h$ on its bias as a sampling method, has yet to be established. Furthermore, \cref{alg:sila_exact} and its guarantees given by \cref{thm:GeoErgodic} require exact solutions of the root-finding problem \cref{eq:overdamped_Langevin_theta_method_0}, whereas Algorithm \ref{alg:sila_inexact} allows for such problems to be solved only inexactly. To address both of these problems, we devote this section to the development of theoretical guarantees of \cref{alg:sila_inexact}, inspired by the techniques of~\citet{dalalyan2017theoretical}. 
These guarantees come in the form of rate of convergence estimates under the 2-Wasserstein metric, defined between two probability
measures $\nu$ and $\pi$ by
\begin{align*}
W_2(\nu, \pi) = \inf_{\v X \sim \nu, \v Y \sim \pi} \|\v X - \v Y\|_{L_2}
\end{align*}
where the infimum is taken over all couplings $(\v X,\v Y)$ of $\nu$ and $\pi$, and is 
attained by some {\em optimal} coupling~\citep[Thm.\ 4.1]{villani2008optimal}.
The 2-Wasserstein metric can be readily linked to other quantities
of interest. For example, from the Kantorovich-Rubinstein formula~\citep[Eqn. (5.11)]{villani2008optimal}, for any $M$-Lipschitz function $\varphi$, we have that
\begin{align*}
|\nu(\varphi) - \pi(\varphi)| \triangleq \left| \int \varphi \, \df (\nu - \pi) \right|  \leq M W_2(\nu,\pi).
\end{align*}

Our guarantees will require the same assumptions on $f$ seen in \citet{dalalyan2017theoretical}, that are Assumptions \ref{ass:Lipschitz} and \ref{ass:Convexity}. 
Under these assumptions, the condition number of $ F $ in \cref{eq:Subproblem_F} can be written as 
\begin{align}
	\label{eq:kappa}
	\kappa_h \triangleq \frac{1+ \frac{1}{2}\theta h M }{1 + \frac{1}{2}\theta h m}.
\end{align}
Recall that the condition number \cref{eq:kappa} encodes and summarizes the curvature (the degree of relative flatness and steepness), of the graph of $F$.   
In optimization, it is well-known that a large condition number typically amounts to a more difficult problem to solve, and hence algorithms that can take such contorted curvature into account (Newton-type methods, for example), are more appropriate \citep{roosta2018SSN,xuNonconvexTheoretical2017}. It is only natural to anticipate that challenges corresponding to problem ill-conditioning similarly carry over to sampling procedures as well. 
Indeed, large ratios of $ M/m $, which imply increasingly anisotropic level-sets for $ f $, can hint at more difficult sampling problems. For example, this difficulty directly manifest itself in ill-conditioning of $ F $, which in turn results in more challenging sub-problems. 
Furthermore, in such situations, taking a larger step size can only exacerbate the ill-condition of $ F $. 
As a result, similar to the role played by second-order methods in optimization, one can naturally expect to see implicit methods to be more appropriate for ill-conditioned sampling problems. 

Under Assumptions~\ref{ass:Lipschitz} and \ref{ass:Convexity}, the discrepancy between the inexact variant of the $\theta$-method given in \cref{alg:sila_inexact}  and the target density $\pi$ under the 2-Wasserstein metric is described in \cref{thm:MidGuarantee}. 
 
~
\begin{theorem}
\label{thm:MidGuarantee}
Suppose $f$ satisfies Assumptions \ref{ass:Lipschitz} and \ref{ass:Convexity}. Let $\theta \in (0,1]$ and 
let $\nu_t$ denote the distribution of the iterate $\v X_t$ obtained by Algorithm \ref{alg:sila_inexact}, for each $t \geq 1$, starting from $\v X_0 \sim \nu_0$. 
Let $\kappa_h$ be as in \eqref{eq:kappa}, and if $\theta < 1$, let
\begin{equation}
\label{eq:HSwitch}
h^{\ast} = \frac{(\theta-\frac12)(M+m) + \sqrt{(\theta-\frac12)^2(M+m)^2 + 4\theta(1-\theta)mM}}{\theta(1-\theta)m M}.
\end{equation}
Furthermore,
\begin{enumerate}[label=(\roman*)]
\item if $h \leq h^{\ast}$ or $\theta = 1$, then let
\begin{equation}
\label{eq:Rho1}
\rho = \frac{1 - \frac12 h(1-\theta)m}{1 + \frac12 h \theta m},\quad \mbox{and}\quad
C = \frac{\kappa_h}{{m}};
\end{equation}
\item alternatively, if $\theta < 1/2$ and $h^{\ast} < h < \frac{4}{M(1-2\theta)}$, or if
$1/2 \leq \theta < 1$ and $h > h^{\ast}$, then let
\begin{equation}
\label{eq:Rho2}
\rho = \frac{\frac12 h(1-\theta)M - 1}{\frac12 h\theta M + 1},\quad \mbox{and}\quad
C = \frac{\frac12 \kappa_h^2 h}{2 + \frac12 h (2\theta - 1)M}.
\end{equation}
\end{enumerate}
Then, for any $t \in \bb N$,
\begin{equation}
\label{eq:WasserDecay}
W_2(\nu_t, \pi) \leq \kappa_h \rho^t W_2(\nu_0,\pi) + C \left( \epsilon + \min\left\{{M}\sqrt{h d}(2+\sqrt{h M}) , {2} \sqrt{Md}\right\} \right).
\end{equation}
\end{theorem}

\begin{remark}
As $\theta \to 0$, for the transition point $h^{\ast}$ we have  $h^{\ast} \to \frac{4}{M+m}$. {Moreover, at $\theta = 0$ and $\epsilon = 0$, (\ref{eq:WasserDecay}) coincides with \citet[Theorem 1]{dalalyan2017user}, up to a different constant in the bias term. 
To see this, observe that $\theta = 0$ implies $\kappa_h = 1$, $\rho = 1 - \frac12 h m$, $C = m^{-1}$ when $h \leq \frac{4}{M + m}$, and $\rho = \frac12 h M -1$, $C = \frac{h}{4 - h M}$ when $\frac{4}{M + m} < h < \frac{4}{M}$. This gives
\[
W_2(\nu_t, \pi) \leq \begin{cases}
(1 - \tfrac12 h m)^t W_2(\nu_0, \pi) + \frac{4M}{m} \sqrt{h d} \\
(\tfrac12 h M - 1)^t W_2(\nu_0, \pi) + \frac{4 M h}{4 - h M} \sqrt{hd}.
\end{cases}
\]}
Theorem \ref{thm:MidGuarantee} may thus be seen as a generalization of \citet[Theorem 1]{dalalyan2017user} to arbitrary $\theta \in [0,1]$ and error $\epsilon$.
\end{remark}
{
\begin{remark}
In the noteworthy case of $\theta = 1/2$, Theorem \ref{thm:MidGuarantee} implies
\[
W_2(\nu_t, \pi) \leq \frac{1+\frac14 hM}{1 + \frac14 hm}\cdot\begin{cases}
\left(\frac{1-\frac14 hm}{1+\frac14 hm}\right)^t W_2(\nu_0, \pi) + \frac{\epsilon + M \sqrt{ h d}(2 + \sqrt{h M})}{m} & \mbox{ if } h \leq \frac{4}{\sqrt{m M}} \\
\left(\frac{\frac14 h M - 1}{\frac14 hM + 1}\right)^t W_2(\nu_0,\pi) + \frac12h\cdot \left(\frac{1 + \frac14 h M}{1 + \frac14 h m}\right)\left(\frac{\epsilon}{2} + \sqrt{M d}\right)  & \mbox{ otherwise}.
\end{cases}
\]
\end{remark}
}

As Theorem~\ref{thm:GeoErgodic} did for Algorithm~\ref{alg:sila_exact}, Theorem~\ref{thm:MidGuarantee} shows that for $\theta \geq 1/2$, Algorithm~\ref{alg:sila_inexact} is stable for all $h > 0$. Theorem~\ref{thm:MidGuarantee} does suggest that smaller values of $\theta$ will achieve faster convergence rates and smaller biases for small step sizes, although this does not appear to be the case in practice (for example, refer to \S\ref{sec:Numerics}).
Observe that, for $ \theta > 1/2 $ and fixed $ h $, the bias term is in the order of $ \mathcal{O}(M^{-1/2})$. This implies that increasing the condition number when $ m $ is bounded below (for example the spherical Gaussian prior in Bayesian regression) results in smaller bias and faster convergence. This is in sharp contrast to ULA whose performance significantly degrades with increasing condition number in such settings.

Also, we would like to reiterate that, in stark contrast to what is observed in \citet{roberts2001optimal} for Metropolis-Hastings algorithms, the rate of convergence in Theorem \ref{thm:MidGuarantee} for Algorithm \ref{alg:sila_inexact} is {\em not} dependent on
the dimension $d$ in any form other than through the appearances of $m$ and $M$. The dimension appears in the bias term simply due to the natural expansion of the Euclidean distance with dimension. In particular, following \citet{Durmus2016}, as the dependence on dimension is at most polynomial, this lends credence to the claim that implicit Langevin methods are well-equipped to handle high-dimensional sampling problems. 

\section{Asymptotics for large step size}
\label{sec:GaussApprox}
While Theorem~\ref{thm:MidGuarantee} provides an essential description of the behavior of \cref{alg:sila_inexact}, the bounds presented there are tightest for smaller step sizes on the order of $1/M$, and are less effective when $h$ is larger. 
Unfortunately, the most useful applications of ILA will occur when $M$ is large, and so the small step size ($h \to 0$) regime will not be all that relevant. Enabled by the increased stability of ILA, we present a novel analysis of Algorithm~\ref{alg:sila_exact} by establishing a central limit-type theorem regarding asymptotic behavior of the iterates in the $h \to \infty$ regime. 

Before we begin with a formal analysis, we are able to obtain insight by considering the behavior of the subproblem \cref{eq:overdamped_Langevin_theta_method} as $h \to \infty$. 
For $ h \gg 1 $, we have
\[
\frac1h \left\lVert \v x - \v x_t + \frac{h(1-\theta)}{2} \nabla f(\v x_t)
+ \sqrt{h} \v z_t \right\rVert^2
= \, (1-\theta) \v x \cdot \nabla f(\v x_t) + \mathcal{O}(h^{-1/2}) + C,
\]
where $C$ does not depend on $\v x$, and so does not contribute to solving \eqref{eq:overdamped_Langevin_theta_method}. As a result, the iterates of the $\theta$ method in the $h \to \infty$ regime will satisfy the relations $\v x_{t+1} = \v x_t^{\theta}$, where we let
\begin{equation}
\label{eq:h_large_iters}
\nabla f(\v x_{t}^{\theta}) = \left(1 - \frac{1}{\theta}\right) \nabla f(\v x_{t}),\qquad \v x \in \mathbb{R}^d.
\end{equation}
Letting $\v x^{\ast}$ denote the unique mode of $\pi$, iterating (\ref{eq:h_large_iters})
gives
\[
\|\nabla f(\v x_t) - \nabla f(\v x^{\ast})\| = \rho^t \|\nabla f(\v x_0) -\nabla f(\v x^{\ast})\|,
\]
where $\rho = \frac1{\theta} - 1$.
Under Assumption \ref{ass:Convexity}, we obtain
\[
\frac{m \rho^t}{M} \|\v x_0 - \v x^{\ast}\| \leq \|\v x_t - \v x^{\ast}\| \leq \frac{M \rho^t}{m} \|\v x_0 - \v x^{\ast}\|.
\]
Therefore, the behavior of
the $\theta$-method for large $h$ is determined according to the three regimes depicted in Table \ref{table:rho_regimes}. 
\begin{table}[htb]
\centering
\caption{Asymptotic behavior of iterates of (\ref{eq:overdamped_Langevin_theta_method}) as $h \to \infty$} \vspace{2mm}
\begin{tabular}{c |c | c}
\hline 
	$0 \leq \theta < 1/2$	 &    $\rho > 1$    &  $\|\v X_t\| \to \infty$ (unbounded in probability) \\
	$\theta = 1/2$	 &   $\rho = 1$     &  iterates oscillate about the mode  \\ 
	$1/2 < \theta \leq 1$	 &  $\rho < 1$      &  $\v X_t \to \v x^{\ast}$ (collapse to the mode) \\
	\hline                            
\end{tabular}\label{table:rho_regimes}
\end{table}
The $\theta < 1/2$ case is clearly undesirable from a practical standpoint. Moreover, the collapse towards
the mode seen when $\theta$ is close to one suggests enormous potential bias for large step sizes. On the other hand, the $\theta = 1/2$ case 
provides no damping effect whatsoever (a fact also supported by Theorem~\ref{thm:MidGuarantee}), making it susceptible to rare large proposals. Based on this preliminary analysis, for some small $\epsilon > 0$, a choice of $\theta = 1/2+\epsilon$ appears to provide the safest, and potentially the most accurate of our $\theta$-method samplers. This aligns with the rule-of-thumb used for $\theta$-method discretization of ODEs \citep[p.\ 85]{ascher2008numerical}. 
To formally extend these characterizations to the implicit $\theta$-method scheme \cref{eq:overdamped_Langevin_theta_method_0}, in Theorem \ref{thm:GaussApprox}, a central limit theorem as $h\to\infty$ is obtained for a single step of the scheme about
the deterministic map $\v x \mapsto \v x^{\theta}$. 
\begin{theorem}
\label{thm:GaussApprox}
Given any $f \in \mathcal{C}^2(\mathbb{R}^d)$, consider iterations given by \cref{eq:overdamped_Langevin_theta_method}, where $\theta \in (0,1]$. Conditioned on $ \v X_{k} $, we have
\begin{align*}
	\sqrt{h}(\v X_{k+1} - \v X_k^{\theta}) \underset{h \to \infty}{\overset{\mathcal{D}}{\longrightarrow}} \mathcal{N}\left(\v 0, \frac{4}{\theta^2} \nabla^2 f(\v X_k^{\theta})^{-2}\right). 
\end{align*}
\end{theorem}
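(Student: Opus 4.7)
The plan is to reduce this to an application of Slutsky's theorem after linearizing the implicit step around the deterministic limit $\v X_k^{\theta}$. I would first rearrange the $\theta$-method iteration into a form that isolates the discrepancy $\v W_h \triangleq \v X_{k+1} - \v X_k^{\theta}$. Writing out \cref{eq:overdamped_Langevin_theta_method_0} with $\v X_{k+1} = \v X_k^{\theta} + \v W_h$ and using the defining identity $\theta \nabla f(\v X_k^{\theta}) = (\theta-1)\nabla f(\v X_k)$ from \cref{eq:h_large_iters}, the iteration collapses to
\begin{equation*}
\v W_h + \frac{h\theta}{2}\bigl[\nabla f(\v X_k^{\theta} + \v W_h) - \nabla f(\v X_k^{\theta})\bigr] = (\v X_k - \v X_k^{\theta}) + \sqrt{h}\,\v Z_k.
\end{equation*}

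Next, I would show that $\v W_h \to \v 0$ as $h \to \infty$, with the sharp rate $\|\v W_h\| = O_{\bb P}(h^{-1/2})$. Taking the inner product of the displayed equation with $\v W_h$ and invoking the $m$-strong monotonicity of $\nabla f$ (implicit in the setting; any non-degenerate curvature at $\v X_k^{\theta}$ would do locally) yields
\begin{equation*}
\left(1 + \tfrac{1}{2} h \theta m\right)\|\v W_h\|^2 \;\leq\; \bigl(\|\v X_k - \v X_k^{\theta}\| + \sqrt{h}\|\v Z_k\|\bigr)\|\v W_h\|,
\end{equation*}
so that $\sqrt{h}\,\|\v W_h\| \leq (\|\v X_k - \v X_k^{\theta}\| + \sqrt{h}\|\v Z_k\|)\sqrt{h}/(1+\tfrac12 h\theta m) = O_{\bb P}(1)$, and in particular $\v W_h \to \v 0$. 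Once this tightness is in hand, the second-order Taylor expansion gives
\begin{equation*}
\nabla f(\v X_k^{\theta} + \v W_h) - \nabla f(\v X_k^{\theta}) = \nabla^2 f(\v X_k^{\theta})\v W_h + \v R_h, \qquad \|\v R_h\| = o(\|\v W_h\|) = o_{\bb P}(h^{-1/2}).
\end{equation*}

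Substituting this Taylor expansion into the rearranged iteration, multiplying by $\sqrt{h}$ on the $\v W_h$ factor and dividing through by $h$, I obtain
\begin{equation*}
\left(\tfrac{1}{h}\m I + \tfrac{\theta}{2}\nabla^2 f(\v X_k^{\theta})\right)\sqrt{h}\,\v W_h + \tfrac{\sqrt{h}\theta}{2}\v R_h \;=\; \tfrac{1}{\sqrt{h}}(\v X_k - \v X_k^{\theta}) + \v Z_k.
\end{equation*}
As $h \to \infty$, the first term on the left is dominated by $\tfrac{\theta}{2}\nabla^2 f(\v X_k^{\theta})\sqrt{h}\v W_h$, the remainder $\tfrac{\sqrt{h}\theta}{2}\v R_h$ vanishes in probability by the previous step, and the drift term $\tfrac{1}{\sqrt{h}}(\v X_k - \v X_k^{\theta})$ vanishes as well. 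Solving for $\sqrt{h}\v W_h$ and applying Slutsky's theorem then yields
\begin{equation*}
\sqrt{h}\,\v W_h \;\cd\; \tfrac{2}{\theta}\,\nabla^2 f(\v X_k^{\theta})^{-1}\v Z_k \;\sim\; \mathcal{N}\!\left(\v 0,\,\tfrac{4}{\theta^2}\nabla^2 f(\v X_k^{\theta})^{-2}\right),
\end{equation*}
which is the claimed limit.

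The main obstacle I anticipate is the uniform control of the Taylor remainder $\v R_h$: one needs to argue that the $o(\|\v W_h\|)$ bound on $\v R_h$, which is pointwise in $\v W_h$, can be lifted to the stochastic statement $\sqrt{h}\v R_h \cp \v 0$. This follows from the tightness $\sqrt{h}\v W_h = O_{\bb P}(1)$ established in the second step, together with continuity of $\nabla^2 f$ on a compact neighborhood of $\v X_k^{\theta}$, but the bookkeeping needs a touch of care. The rest of the argument is standard linearization combined with the continuous mapping theorem.
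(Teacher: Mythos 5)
Your argument is correct and takes a genuinely different route from the paper. The paper works directly with the transition density: it writes $\tilde{p}_h(\v y \mid \v x)$ for the rescaled increment via the change-of-variables formula, collapses the exponent with an integral mean-value term $H_h(\v y) = h^{-1}\m I + \tfrac{\theta}{2}\int_0^1 \nabla^2 f(\v x^{\theta} + t h^{-1/2}\v y)\,\dd t$, and shows the density converges pointwise to $\phi(\v y; \v 0, \tfrac{4}{\theta^2}\nabla^2 f(\v x^{\theta})^{-2})$ (Scheff\'e then gives weak convergence). You instead linearize the defining implicit equation around $\v X_k^{\theta}$, establish $\sqrt{h}\v W_h = O_{\bb P}(1)$ via strong monotonicity, control the Taylor remainder, and close with Slutsky. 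Both are valid. The trade-off: the paper's approach is agnostic to a global monotonicity constant and only needs continuity of $\nabla^2 f$ together with the well-posedness of the change of variables; yours is a more probabilistically standard ``linearize-then-Slutsky'' derivation, but your tightness step genuinely consumes the global lower bound $\langle\nabla f(\v u)-\nabla f(\v v),\v u-\v v\rangle\geq m\|\v u-\v v\|^2$ from \cref{ass:Convexity} (your parenthetical that ``any non-degenerate curvature at $\v X_k^{\theta}$ would do locally'' is not quite enough as stated, since the a priori tightness bound is needed before localization can be invoked). Since the theorem statement only assumes $f\in\mathcal{C}^2$, this is strictly a stronger hypothesis than the paper formally invokes, though in the log-concave setting of the paper it is certainly available. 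One minor bookkeeping point you glossed over but which is harmless: $\v Z_k$ has a fixed law across $h$, so the Slutsky argument is best read as a coupling with $\v Z \sim \mathcal{N}(\v 0,\m I)$ held fixed and the difference $\sqrt{h}\v W_h - \tfrac{2}{\theta}\nabla^2 f(\v X_k^{\theta})^{-1}\v Z \cp \v 0$; your displayed equations already show exactly this, so the conclusion stands.
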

Theorem \ref{thm:GaussApprox} implies that as $h\to \infty$, the implicit $\theta$-method scheme behaves similarly to a Markov chain $\{\v W_k\}$ with transitions
\[
\v W_{k+1} = \v W_k^{\theta} + \frac{2}{\theta\sqrt{h}} \nabla^2 f(\v W_k^{\theta})^{-1} \v Z_k,
\]
whose dynamics mimic those of the map $\v x \mapsto \v x^{\theta}$, but with an additional normally-distributed noise term at each step. Furthermore, the variance of this noise term increases as the implicit component of the scheme diminishes (taking $\theta \to 0$). {Despite the unusual $h \to \infty$ regime, we have found that, for typical step sizes, Theorem \ref{thm:GaussApprox} provides a surprisingly accurate description of the transition dynamics of ILA when $\theta \geq 1/2$, and is ideal for developing heuristics.}

\subsection{A heuristic choice for step size}
\label{sec:step_heuristic}
A consequence of the proof of Theorem \ref{thm:GaussApprox} is that the covariance $\v\Sigma_h(\v x)$ of the proposal density from the transition kernel $p(\v y \, \vert \, \v x)$ behaves asymptotically as
\[
\v\Sigma_h(\v x) \approx h \left(\m I + \frac{h\theta}{2} \nabla^2 f(\v x^{\theta})\right)^{-2},\qquad \mbox{as }h\to\infty.
\]
Conversely, {{applying the inverse function theorem to (\ref{eq:overdamped_Langevin_theta_inter}) reveals a linear approximation about $h = 0$, and hence}}
\[
\v\Sigma_h(\v x) \approx h \left(\m I + \frac{h\theta}{2} \nabla^2 f(\v x)\right)^{-2},\qquad \mbox{as }h \to 0.
\]
These two expressions coincide when $\v x = \v x^{\theta} = \v x^{\ast}$, where $ \v x^{\ast} $ denotes the mode. At this point, one might expect a `good' transition kernel to resemble the Laplace approximation of the distribution about $\v x^{\ast}$, which has covariance $\nabla^2 f(\v x^{\ast})^{-1}$. This suggests a heuristic for choosing a good step size in practice, by taking $h$ as a solution to the one-dimensional optimization problem
\begin{equation}
\label{eq:HeurProblem}
\hat{h}_{\theta} \coloneqq \argmin_{h \geq 0} \left\lVert h \left(\m I + \frac{h\theta}{2}\nabla^2 f(\v x^{\ast})\right)^{-2} - \nabla^2 f(\v x^\ast)^{-1}\right\rVert_E,
\end{equation}
where the norm $\|\cdot\|_E$ can be any matrix norm of choice.
Solutions to (\ref{eq:HeurProblem}) can be obtained using off-the-shelf methods in univariate optimization, such as golden section search \citep[\S9.5]{cottle2017linear}. 
We will show in the next section that, for several examples, the step size obtained from \cref{eq:HeurProblem} with Frobenius norm tends to be an effective choice in practice, especially for $\theta = 1/2$, where it reveals itself to be {\em near optimal} in all of our experiments. 
For this choice of norm, \cref{eq:HeurProblem} can be replaced by the equivalent problem
\begin{align}
\label{eq:HeurProblem2}
\hat{h}_{\theta} = \argmin_{h \geq 0} \sum_{k=1}^d \left[h\left(1 + \frac{h\theta}{2}\lambda_k\right)^{-2} - \frac{1}{\lambda_k}\right]^2,
\end{align} 
where $\lambda_1, \ldots, \lambda_d$ are the eigenvalues of $\nabla^2 f(\v x^{\ast})$.
One drawback is that solving \eqref{eq:HeurProblem} or \eqref{eq:HeurProblem2} require either inversion of $\nabla^2f(\v x^{\ast}$), or knowledge of its spectrum, respectively, both of which may be prohibitively expensive in high dimensions.
In many problems, however, it is reasonable to assume a certain distribution of its spectrum; for example, that the eigenvalues $\lambda_1\geq\cdots\geq\lambda_d$ of $\nabla^2 f(\v x^{\ast})$ decay exponentially:
\begin{equation}
\label{eq:SpectralDist}
\log \lambda_k \approx \left(1 - \frac{k-1}{d-1}\right)\log M + \frac{k-1}{d-1} \log m,\qquad k=1,\dots,d,
\end{equation}
where $m$ and $M$ take the place of the smallest and largest eigenvalues of $\nabla^2 f(\v x^{\ast})$, respectively. Simplifying assumptions such as these can be justified in problems where Hessian of $ f $ is approximately low rank, in the sense that it has a small stable rank \citep{roas1}, and hence its spectrum decays fast. Under these assumptions, solving \cref{eq:HeurProblem2} becomes more tractable; we shall make use of this for \cref{fig:MuskPlot} in \S\ref{sec:Numerics}.

\section{Numerical experiments}
\label{sec:Numerics}
In this section, we evaluate the empirical performance of Algorithms \ref{alg:sila_exact} and \ref{alg:sila_inexact} in  high-dimensions as measured by a few discrepancy measures. 
Recall that the total variation distance between any two absolutely continuous distributions with densities $p$ and $q$ over $\mathbb{R}^d$ respectively is given by
\begin{align*}
	\TV(p,q) \coloneqq \frac12 \int_{\mathbb{R}^d} |p(\v x)-q(\v x)|\dd \v x.
\end{align*}
Since the total variation metric is too difficult to directly estimate in higher dimensions, we follow the standard approach in the literature (see for example \citet{Durmus2016} and \citet{Maire2018}) and consider instead the \emph{mean marginal total variation} (MMTV), 
\[
\MMTV(p,q) \coloneqq \frac1{2d} \sum_{i=1}^d \int_{\mathbb{R}} |p_i(x)-q_i(x)|\dd  x,
\]
which we estimate as follows. First, kernel smoothing is applied to samples for each marginal from an extended MCMC run, as well as samples obtained from a single run of each method. The total variation between these {\em estimated} univariate densities is then computed with high accuracy via Gauss-Kronrod quadrature \citep{kahaner1989numerical}. 

As a weakness of MMTV is its inability to adequately compare the covariances within coordinates between the two sample sets, we also compare with a second discrepancy measure; \emph{maximum mean discrepancy} (MMD) \citep{gretton2012kernel,muandet2017kernel}. Letting $\mathcal{H}$ denote a reproducing kernel Hilbert space over $\mathbb{R}^d$ with reproducing kernel $k$, MMD is defined as the integral probability metric
\begin{align*}
\MMD^{2}(p,q) &\coloneqq \left[ \sup_{\|h\|_{\mathcal{H}} \leq 1} \int_{\mathbb{R}^d} h(x) [p(x)-q(x)] \dd x \right]^{2} \\ &= \mathbb{E}_{p,p} k(X,\tilde{X}) - 2 \mathbb{E}_{p,q} k(X,Y) + \mathbb{E}_{q,q} k(Y,\tilde{Y}),
\end{align*}
where $ X, \tilde{X} $ are independent random variables with distribution $ p $, and $ Y, \tilde{Y} $ are independent random variables with distribution $ q $.
These expectations can be estimated using samples from $p$ and $q$. In our experiments, we use the Gaussian kernel:
\[
k(\v x,\v y) = \exp\left(-\frac1{2\sigma^2} \|\v x - \v y\|^2\right),
\]
where the kernel bandwidth parameter $\sigma$ is chosen so that $2 \sigma^2$ is the median of $(\|\v x_i~-~\v x_j\|)_{i,j=1}^n$, where $\v x_i$ denotes the $i$-th sample taken from $q$.

\subsection{High-dimensional Gaussian distributions}
To highlight the effects of problem ill-conditioning, we once again consider 
sampling from a multivariate Gaussian distribution, as in (\ref{eq:NormalExample}), with
explicitly computable iterates given by (\ref{eq:NormalExampleIters}). It is easy to see
that $f$ satisfies Assumptions \ref{ass:Lipschitz} and \ref{ass:Convexity}. 
To show efficacy in higher dimensions, we will consider $d=1000$. 
Furthermore, to test the effects of ill-conditioning, we focus on three choices of $\v\Sigma$ with condition numbers $\kappa \in \{1, 100, 10^8\}$. Each $\v\Sigma$ is
generated using the method of \citet{bendel1978population} to uniformly sample a correlation matrix
with eigenvalues given by (\ref{eq:SpectralDist}) for $m = 1$ and $M = \kappa$. 
For simplicity, we take $\v \mu = \v 0$. 
MMTV and MMD discrepancies were computed between $\pi$ and 
samples of $N = 5000$ points
generated by Algorithm \ref{alg:sila_exact} with $\theta \in \{0,1/2,1\}$ and
a variety of step sizes $h$ (encompassing $4/M$ and the step size heuristics in
\S\ref{sec:GaussApprox}). Common random numbers were used, and no burn-in period was applied.
The results are shown in Figure \ref{fig:NormalPlots}. 

\begin{figure}[H]
\begin{tabular}{ccc}
\hphantom{$\kappa = 1$\quad} & {\bf Mean Marginal Total Variation} & {\bf Maximum Mean Discrepancy}
\end{tabular}
\begin{tabular}{cc}
$\kappa = 1\,$: &
\noindent\parbox[c]{\hsize}{\includegraphics[width=0.4\textwidth]{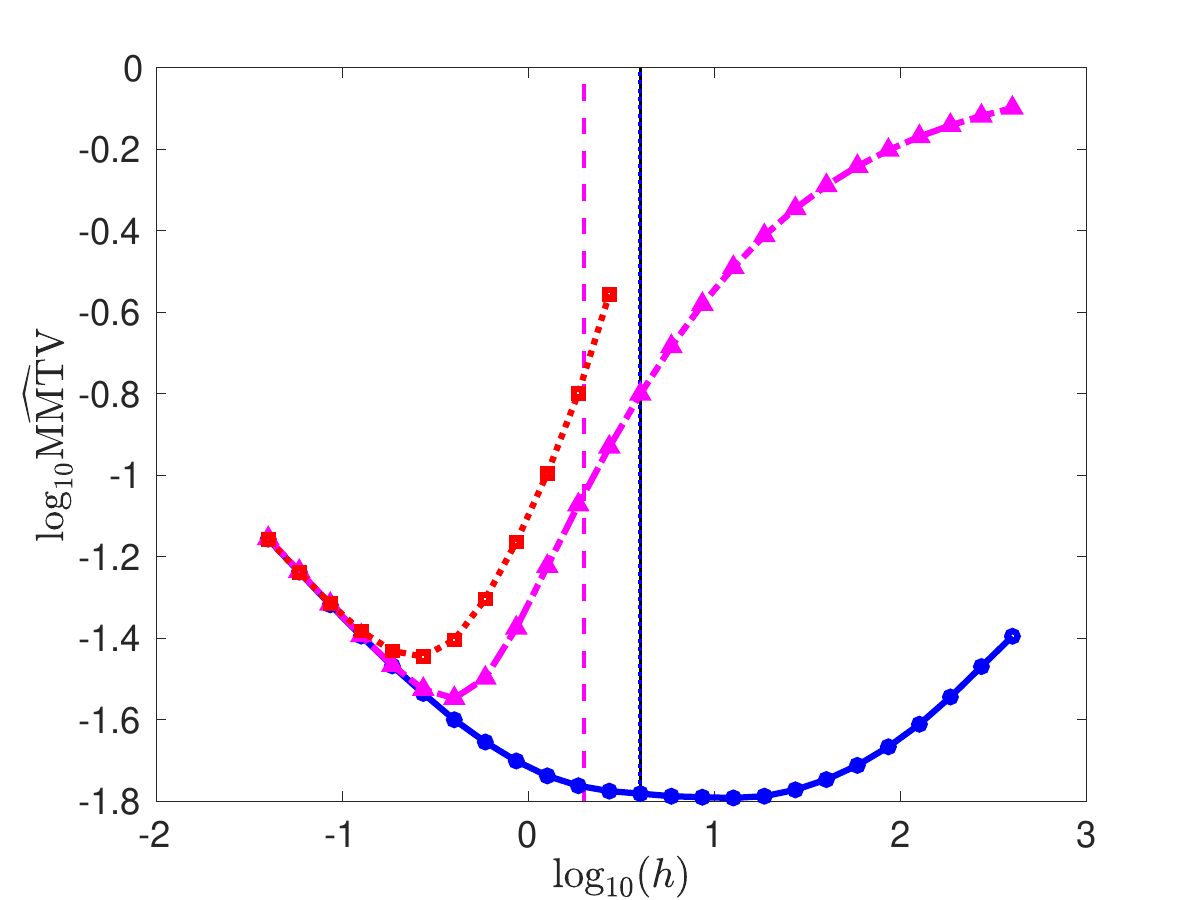}\includegraphics[width=0.4\textwidth]{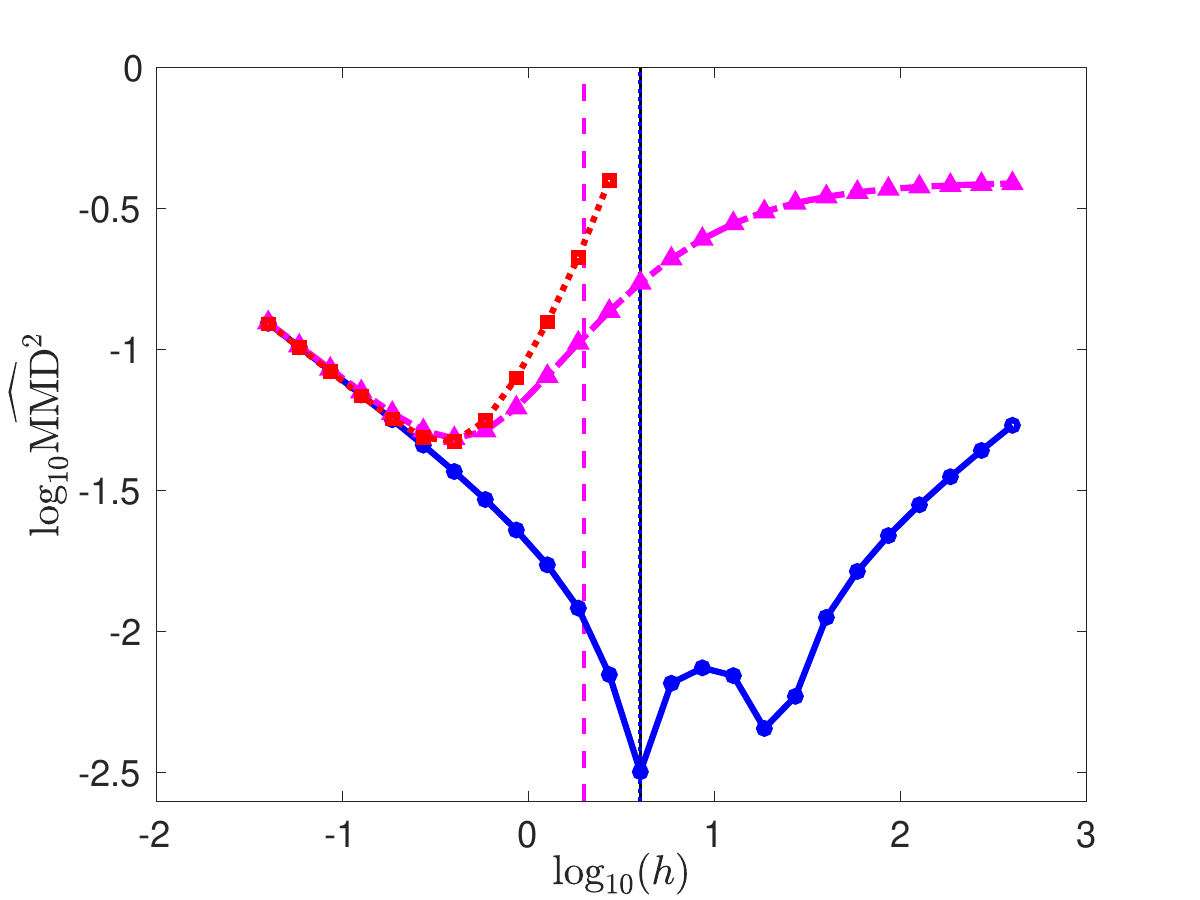}} \\
$\kappa = 100\,$: &
\noindent\parbox[c]{\hsize}{\includegraphics[width=0.4\textwidth]{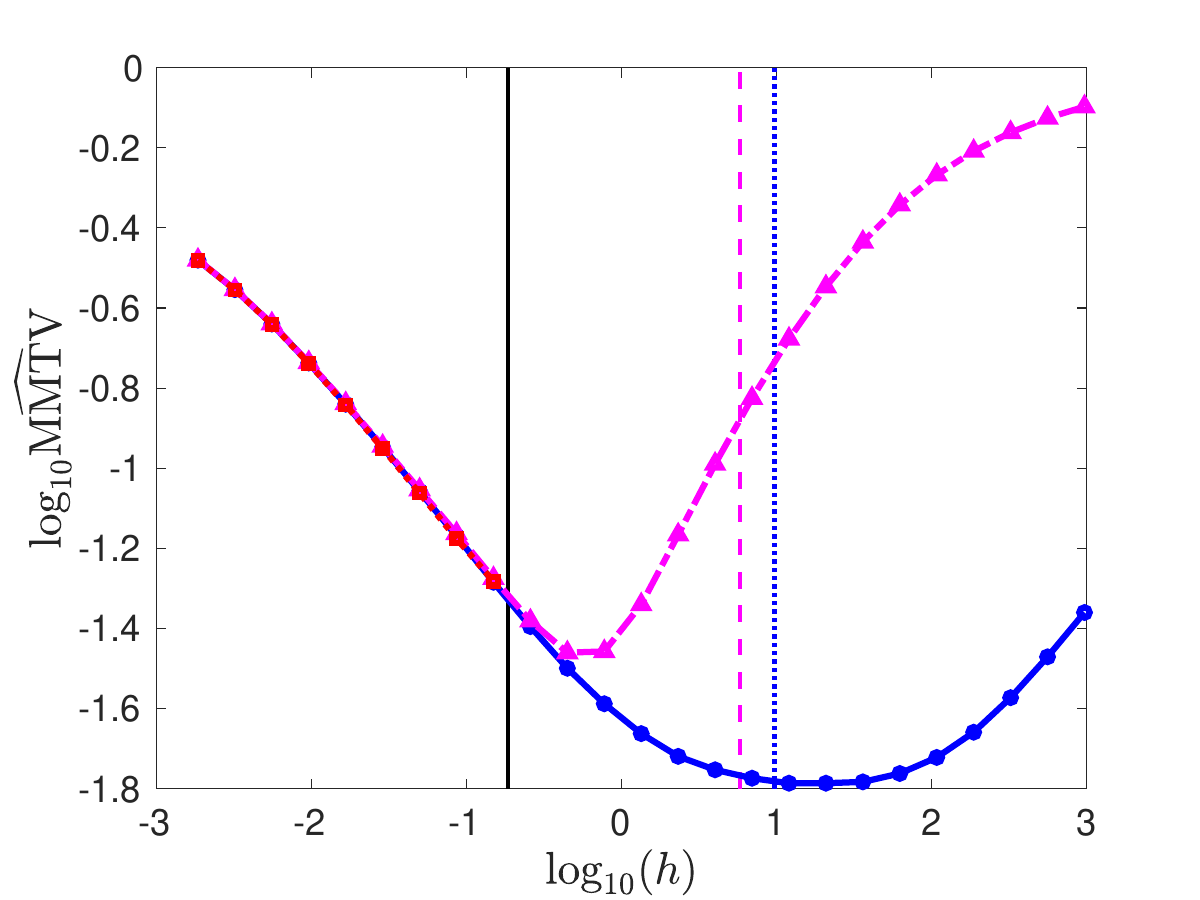}\includegraphics[width=0.4\textwidth]{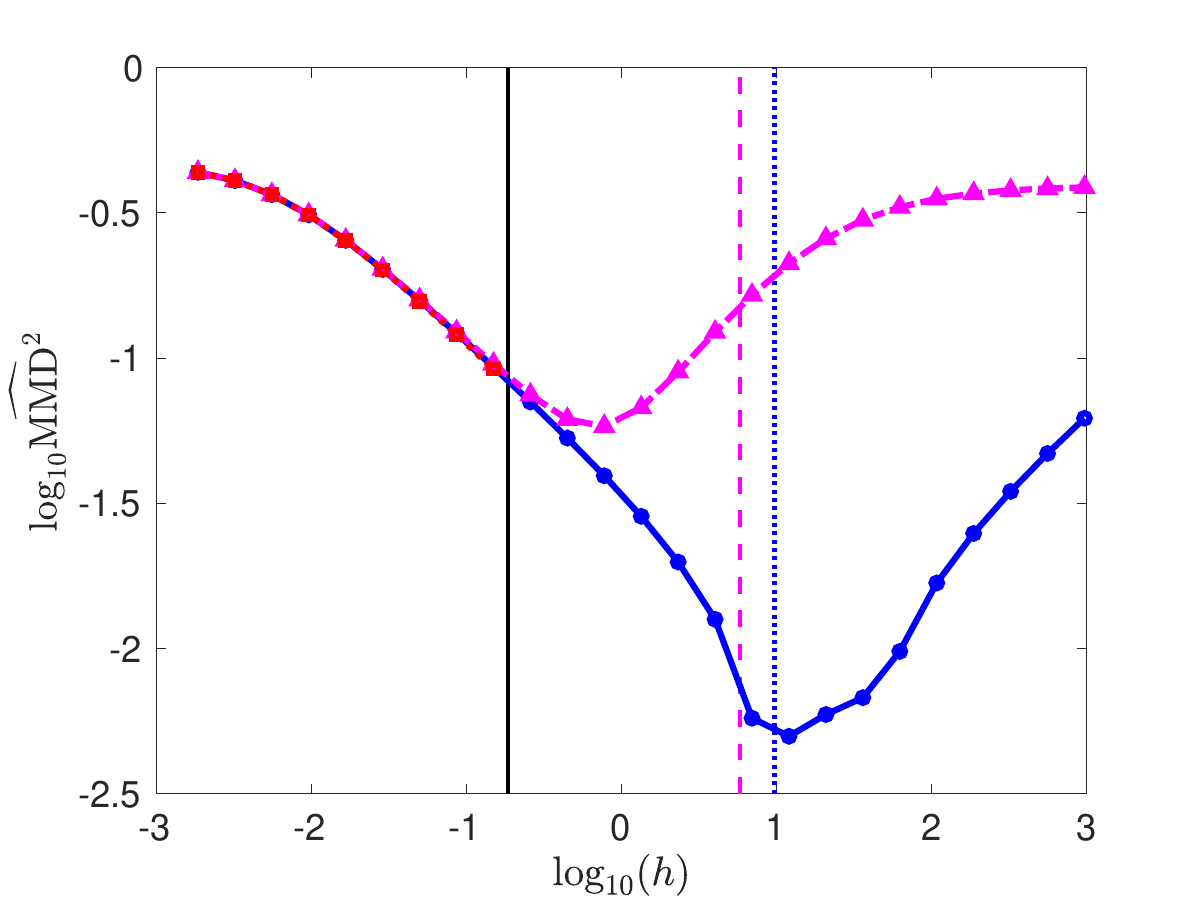}} \\
$\kappa = 10^8\,$: &
\noindent\parbox[c]{\hsize}{\includegraphics[width=0.4\textwidth]{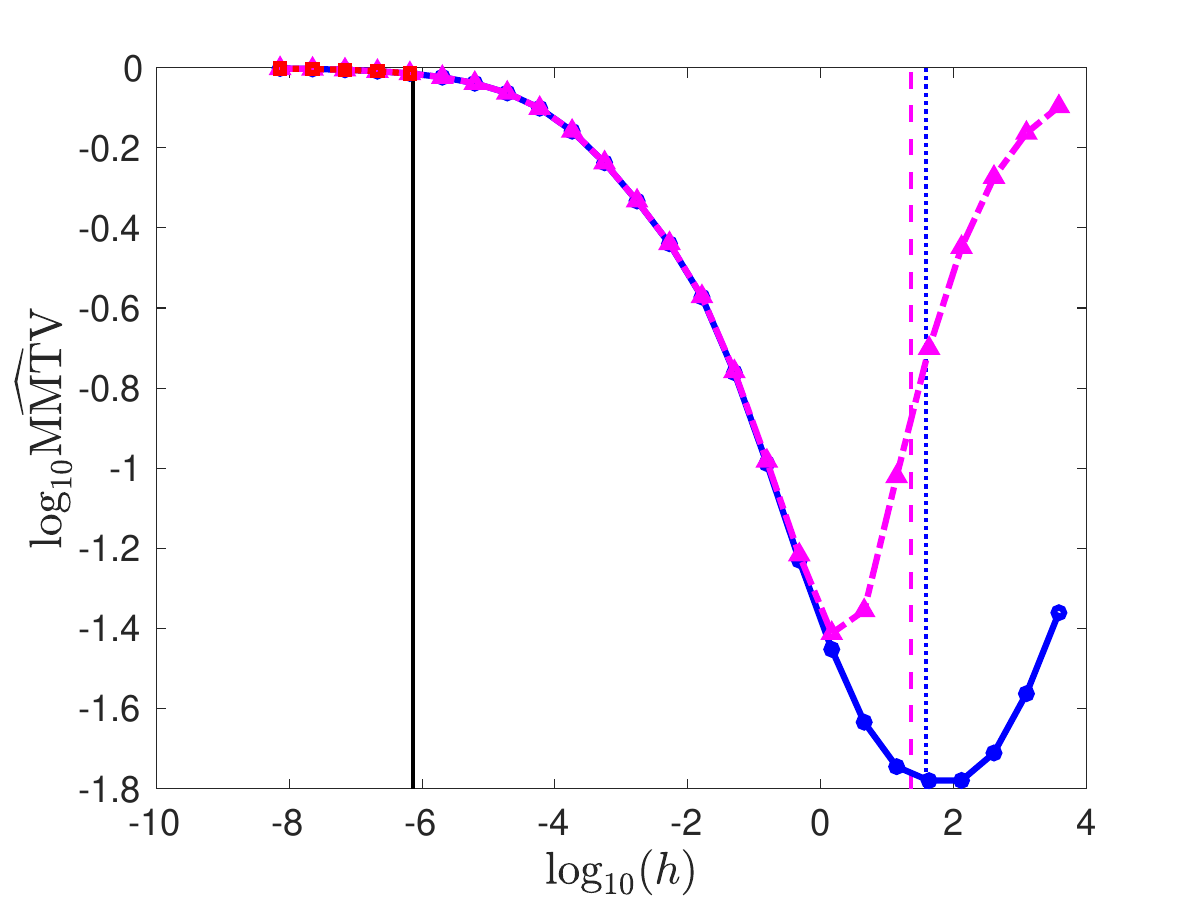}\includegraphics[width=0.4\textwidth]{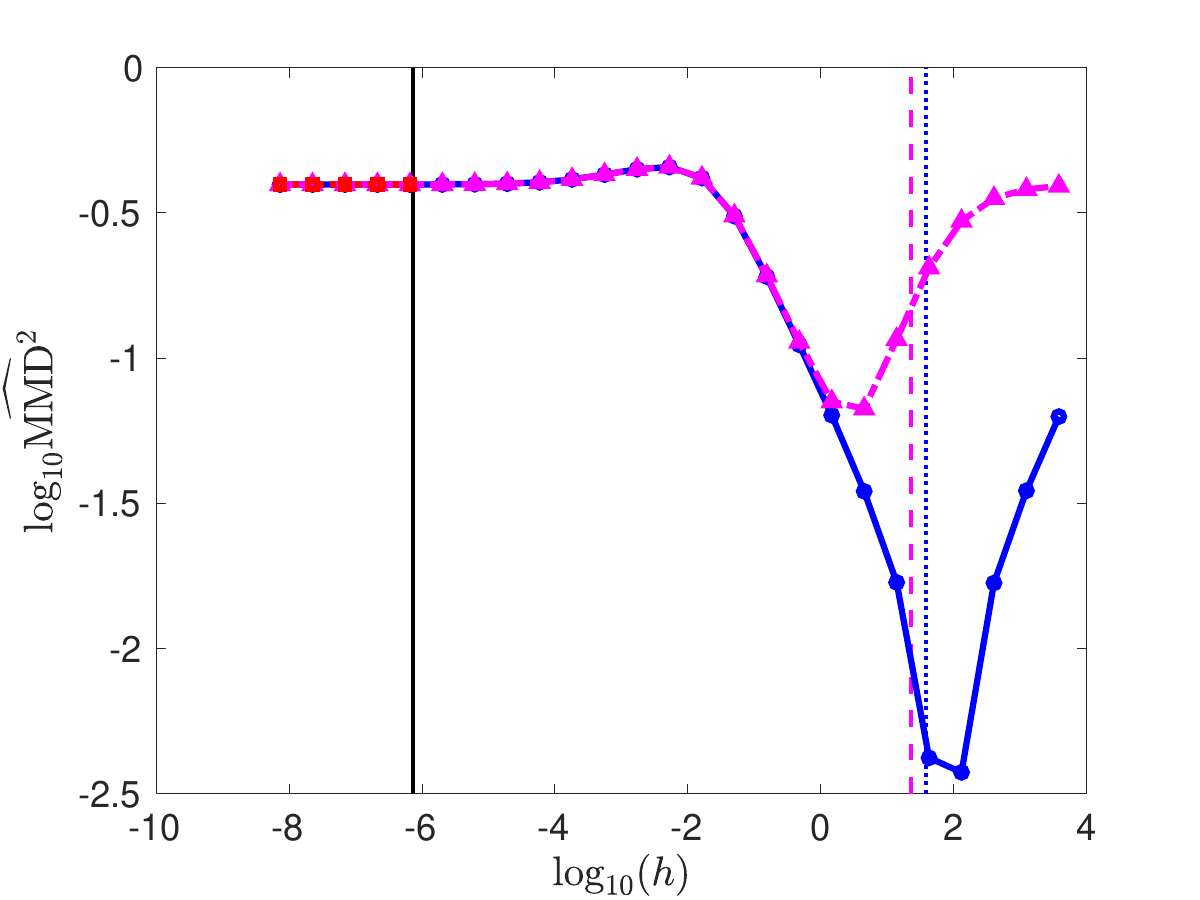}} \\
\end{tabular}
\begin{center}
\vspace{-0.25cm}
\includegraphics[width=0.4\textwidth]{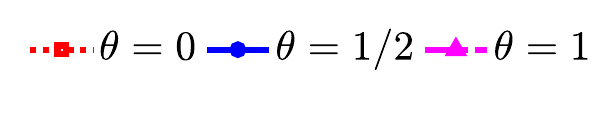}\\
\vspace{-0.5cm}
\includegraphics[width=0.5\textwidth]{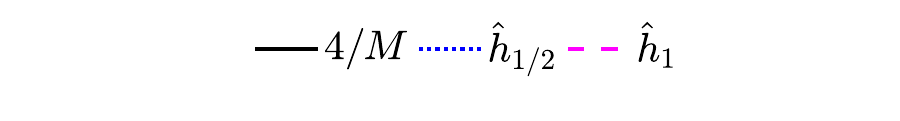}
\end{center}
\vspace{-.8cm}
\caption{MMTV estimates and MMD discrepancies for 5000 samples
generated by Algorithm \ref{alg:sila_exact} with $\theta \in \{0,1/2,1\}$, $h \in [\frac{4}{100M},100\hat{h}_{1/2}]$, and target distribution $\pi$ given by (\ref{eq:NormalExample}) with $\v \mu = \v 0$ and $\v\Sigma$ a correlation matrix with condition number $\kappa \in \{1,10^2,10^8\}$. \label{fig:NormalPlots}}

\end{figure}
Due to the rapid explosion in magnitude of samples generated by ULA when $h \geq 4/M$,
we only display discrepancies for ULA for $h < 4/M$. This critical value of
is highlighted as a black solid vertical line. 

In light of the large step size asymptotics, the existence of an ``optimal'' step size for $\theta\geq1/2$ as evidenced in these plots is perhaps not too surprising. However, it is surprising to see that, especially for large $\kappa$, this optimum is much greater than the maximum allowed step size of $4/M$ for ULA. Most notable here is the greatly improved performance of the implicit method ($\theta=1/2$) at this optimum over ULA for any allowable step size. Moreover, in all cases, the optimal performance of ILA for $\theta=1/2$ exceeds that of the purely implicit method ($\theta=1$). These two facts are not suggested by Theorem \ref{thm:MidGuarantee}, implying that the large step size asymptotics should indeed play a significant role in the analysis of implicit methods moving forward. 

In all cases, the step size heuristic for $\theta=1$ performs poorly, suggesting the fully implicit case operates by a different mechanism that is currently unknown to us. For $\kappa=1$, $\hat{h}_{1/2}~=~4/M~\equiv~4$, which is clearly the optimal step size, as it yields exact samples ($\v X_{k+1}=\v Z_{k}$). In fact, $\theta=1/2$, $h=4$ is the only choice of $\theta$ and $h$ which results in exact samples in this scenario. According to both estimated MMTV and MMD, the step size heuristic is an almost optimal choice of $ h $ for all $\kappa$ considered, even in high dimensions. 

\subsection{Logistic regression}
We now consider sampling problems involving the Bayesian posterior densities of generalized linear models (GLM),  which have log-concave likelihood functions, with Gaussian priors. For simplicity, and without loss of generality, we consider radially symmetric Gaussians. For a GLM with this choice of prior, posterior densities are {proportional to $\exp(-f(\v x))$, with} 
\begin{align*}
f(\v x) = \sum_{i=1}^{n} \left( \Phi(\v a_{i}^{\top} \v x) - b_{i} \v a_{i}^{\top} \v x \right) + \frac{\lambda}{2} \|\v x\|^{2},
\end{align*}
where $(\v a_{i},b_{i}), \; i = 1,2,\cdots,n,$ are the response and covariate pairs, $\v a_{i} \in \mathbb{R}^{p}$, and the domain of $b_{i}$ depends on the GLM. The cumulant generating function, $\Phi$, determines the type of GLM. For example, in the case of logistic regression, $ \Phi(t) = \log(1+\exp(t)) $; see~\citet{mccullagh1989generalized} for further details and applications. 
It is easy to see that
\begin{align*}
\nabla^2 f(\v x) = \sum_{i=1}^{n}\v a_{i} \v a^{\top}_{i} \Phi^{''}(\v a_{i}^\top \v x) + \lambda \m I = \m A^{\top} \m D \m A + \lambda \m I,
\end{align*} 
where $ \m A \in \reals^{n \times d}$ is a matrix whose $i$-th row is $ \v a_{i} $, $ \m D $ is a diagonal matrix whose $i$-th diagonal element is $ \Phi^{''}(\v a^\top \v x) $, and $\lambda$ is the precision parameter of the prior. As a result, for Assumption \ref{ass:Convexity}, we have
\begin{equation}
\label{eq:BayesRegSpecBounds}
\lambda \leq m \leq M \leq \|\m A\|^{2} \sup_{t \in \reals} \Phi^{''}(t) + \lambda. 
\end{equation}
For our example, we consider Bayesian logistic regression in this setting, yielding
\begin{equation}
\label{eq:MuskTarget}
f(\v x) \varpropto  \sum_{i=1}^{n} \left(\log\left(1 + \exp(\v a_{i}^{\top} \v x)\right) - b_{i} \v a_{i}^{\top} \v x \right) + \frac{\lambda}{2}||\v x||^2,
\end{equation}
and $\sup_{t \in \reals} \Phi^{''}(t) \leq 1/4$. We use the
\texttt{musk} (version 1) data set from the UCI repository \citep{Dua:2019}, with the prior precision parameter $\lambda = 1$. These choices yield a target distribution which is relatively
ill-conditioned, whose Hessian $\nabla^2 f(\v x^{\ast})$ about its mode $\v x^{\ast}$ possesses a condition number of $\kappa > 2 \times 10^3$. 
We estimate the values $m$ and $M$ according to their lower and upper bounds in (\ref{eq:BayesRegSpecBounds}). 
For the target density given according
to (\ref{eq:MuskTarget}), MMTV and MMD discrepancies were computed between samples
of $N = 10000$ points generated using Algorithm \ref{alg:sila_inexact} (with $\theta \in \{0,1/2,1\}$, $\epsilon = 10^{-9}$ and a variety of step sizes $h$ encompassing $4/M$ and the step size heuristics in \S\ref{sec:GaussApprox} under the assumption that eigenvalues are distributed according to Equation \ref{eq:SpectralDist}) and a gold standard run comprised of 50,000 samples obtained from hand-tuned SMMALA \citep{girolami2011riemann}. Due to the large difference in computation time between ULA and ILA, for the sake of comparison, we also computed MMTV and MMD discrepancies for the ULA algorithm using the same step sizes, now with a thinning factor of 50. This factor was chosen so that the computation time of ULA became roughly equivalent to the other ILA methods. Once again, common random numbers were used, and no burn-in period was applied. The results are shown in Figure \ref{fig:MuskPlot}, and follow a similar pattern to those found in the previous
example. The step size heuristic for $\theta = 1/2$ performs admirably in this case,
yielding samples with smaller discrepancies to the gold standard run than ULA for any reasonable step size, even when thinned to account for the difference in computation time.

\begin{figure}[t]
	\centering
\begin{tabular}{cc}
{\bf Mean Marginal TV} & {\bf Maximum Mean Discrepancy} \\
\includegraphics[width=0.4\textwidth]{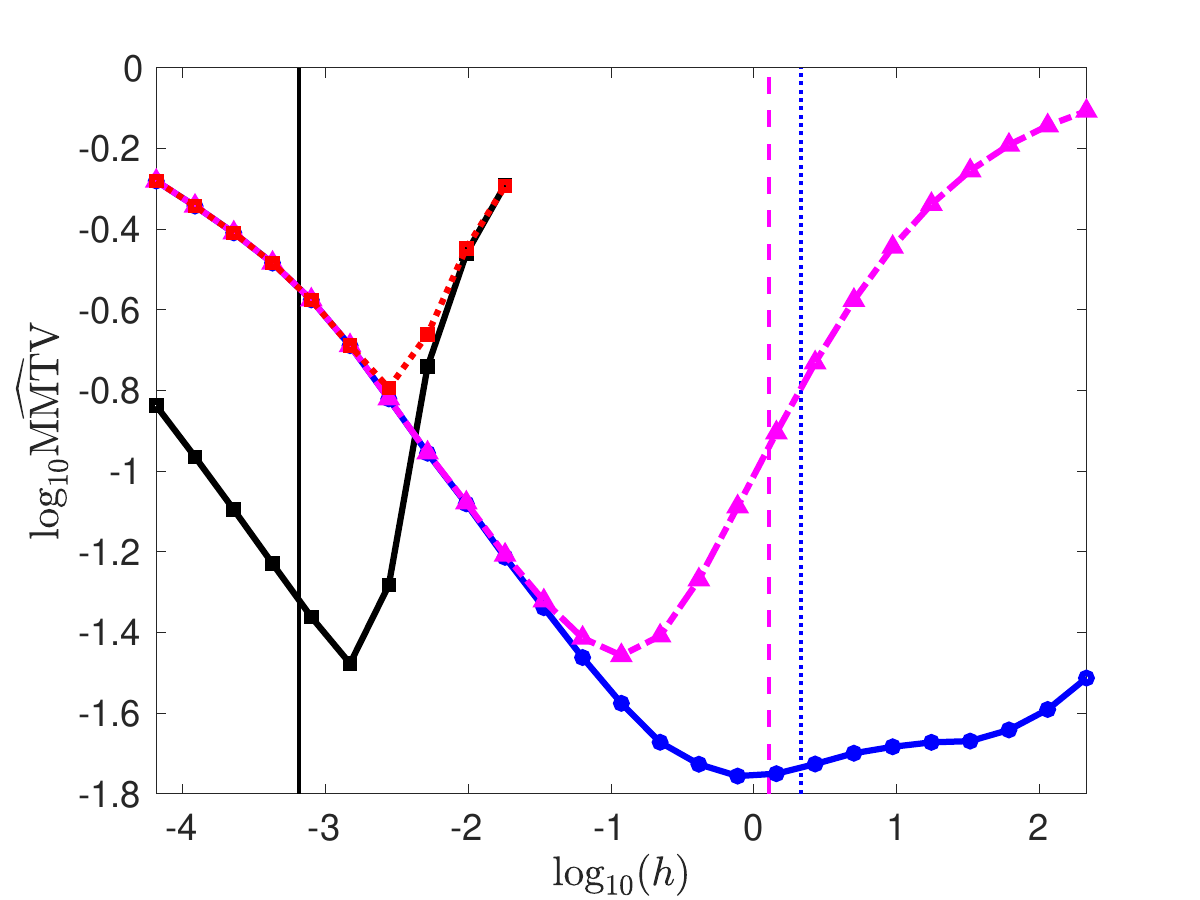} &
\includegraphics[width=0.4\textwidth]{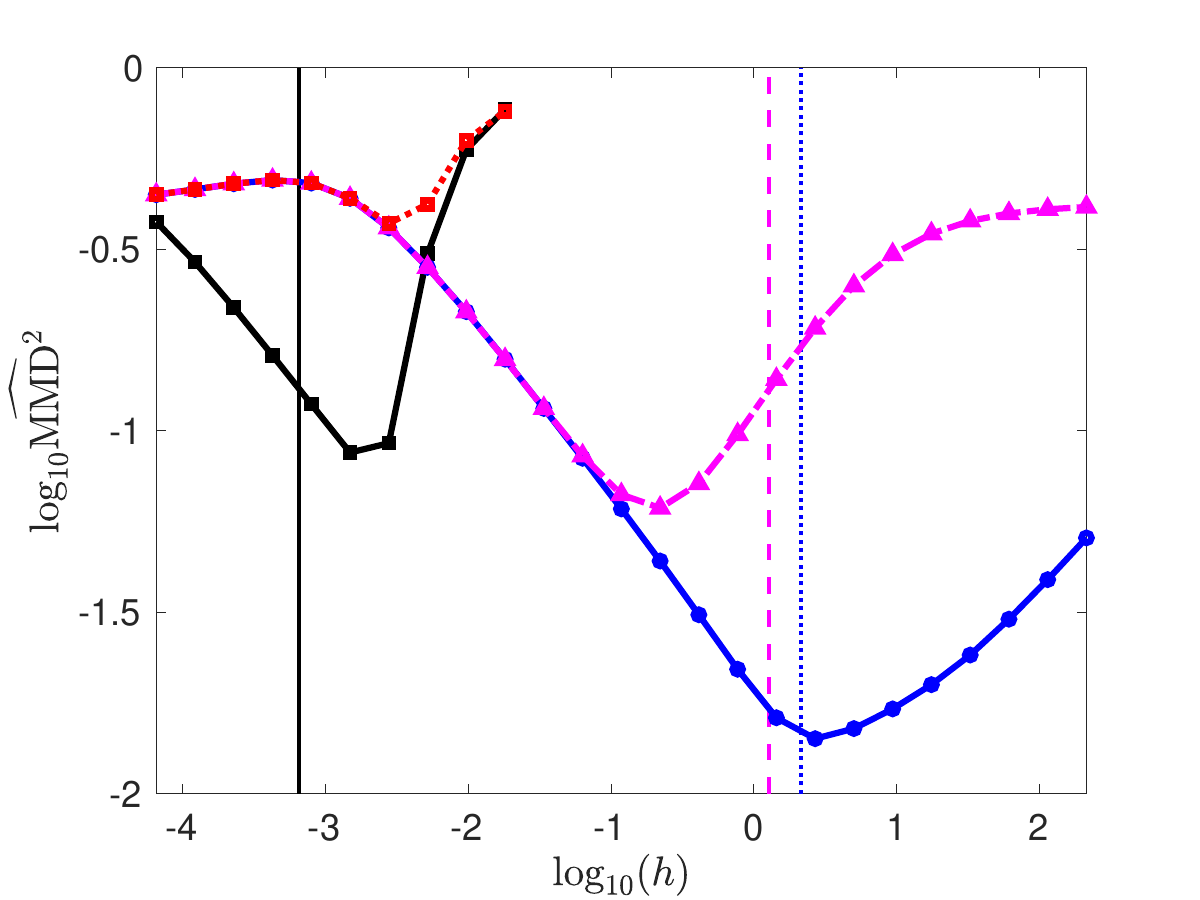}
\end{tabular}
\vspace{-0.25cm}
\begin{center}
\includegraphics[width=0.55\textwidth]{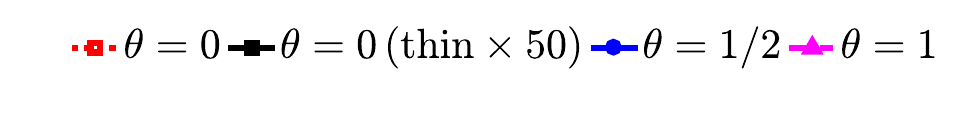}\vspace{-0.5cm}\\
\includegraphics[width=0.5\textwidth]{VerticalLegend.pdf}
\end{center}
\vspace{-0.5cm}
\caption{MMTV and MMD discrepancies between 10000 samples
generated by Algorithm \ref{alg:sila_inexact} with $\theta \in \{0,1/2,1\}$ and ULA with a thinning factor of 50, over $h \in [\frac{4}{10M},100 \hat{h}_{1/2}]$, and gold standard run, for target distribution specified according to (\ref{eq:MuskTarget}). \label{fig:MuskPlot}}
\end{figure}

\section{Conclusions}
\label{sec:conclusion}
In the context of sampling from an unnormalized probability distribution, we considered a general class of unadjusted sampling algorithms that are based on implicit discretization of the Langevin dynamics. 
Unlike the traditional Metropolis-adjusted sampling algorithms, these unadjusted methods relax the requirement of consistency that the sample empirical distribution should asymptotically be the same as the target distribution, and hence avoid incurring serious penalty to the mixing rate of the chain. As a result, these variants generate rapidly converging Markov chains whose stationary distributions are only approximations of the target distribution with a bias that is of adjustable size.
When one seeks a fixed (finite) number of samples, which is almost always the case in practice, this latter unadjusted view point can offer greatly many advantages. 

In this context, we focused on the class of discretization schemes 
generated using $\theta$-method in the context of smooth and strongly log-concave densities, explicitly deriving the transition kernel of the chain and establishing the corresponding sub-problems that are formulated as optimization problems. For smooth densities, the resulting implicit Langevin algorithms (ILA) have been shown to be geometrically ergodic for $ \theta \geq 1/2 $, irrespective of the step size. We also considered inexact variants (i-ILA) where the optimization sub-problems are solved only approximately. For this, we established non-asymptotic convergence of the sample empirical distribution to the target as measured by 2-Wasserstein metric, finding again that for $\theta > 1/2$, the resulting scheme is unconditionally stable for all step sizes. Furthermore, the growth rate in the bias term, that is shown to depend on problem's condition number, is greatly diminished for $\theta > 1/2$. Together with our numerical experiments, this suggests that the implicit methods are a more appropriate choice for ill-conditioned problems than explicit schemes. Furthermore, the case $\theta = 1/2$ appears to perform best in practice, especially when paired with our default heuristic choice of step size. The underlying reason for this is likely related to its asymptotic exactness for the normal distribution. It was suggested in
\citet{wibisono2018sampling} that the asymptotic bias of the $\theta=1/2$ case could be second-order accurate, which would imply the increased performance we have observed. Unfortunately, proving this claim remains an open problem.

Although enticing, extensions of these results to non-convex cases may
prove challenging due to the potential lack of unique solutions for the implicit scheme, and the relative difficulty of non-convex optimization in general. Nevertheless, one could find success in considering $f$ that is only strongly convex outside of a compact region, as in \citet{cheng2018sharp}. Furthermore, although it has not been treated explicitly, we believe that implicit methods should prove effective in big data problems, that is with $ f(\v x) = \sum_{i=1}^{n} f_{i}(\v x) $ and $ n \gg1 $, where it might be computationally prohibitive to evaluate $f$ or its gradient exactly. In this regard, one can use optimization algorithms that can employ inexact oracle information; see
\citet{roosta2018SSN} for example. The efficacy of this approach would prove interesting for future research. 

\section{Acknowledgements}
All authors have been supported by the Australian Centre of Excellence for Mathematical and Statistical Frontiers (ACEMS) under grant number CE140100049. Liam Hodgkinson acknowledges the support of an Australian Research Training Program (RTP) Scholarship. Fred Roosta was partially supported by the Australian Research Council through a Discovery Early Career Researcher Award (DE180100923). Part of this work was done while Fred Roosta  was visiting the Simons Institute for the Theory of Computing.

\clearpage
\appendix
\section{Proofs}
\label{sec:proof}

\subsection{Proof of Theorem 1 (Geometric Ergodicity)}
To establish geometric ergodicity, we prove the stronger Proposition \ref{prop:VUniform} below. 
First, we connect Assumptions \ref{ass:Lipschitz} and \ref{ass:SecantCond}
to the lower bound (\ref{eq:SecantCondConseq}).
\begin{lemma}
\label{lem:AssumpConnection}
The condition (\ref{eq:SecantCondConseq}) holds under Assumptions \ref{ass:Lipschitz} and \ref{ass:SecantCond}.
\end{lemma}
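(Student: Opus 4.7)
The plan is to split $\mathbb{R}^d$ into a bounded region $\{\|\v x\|\leq R\}$ and its exterior, establish \eqref{eq:SecantCondConseq} on each piece, and take $c(\v y)$ as the maximum of the constants required. The central algebraic move is the expansion
$$
\langle \nabla f(\v x) - \nabla f(\v y), \v x - \v y\rangle = \langle \nabla f(\v x), \v x\rangle - \langle \nabla f(\v x), \v y\rangle - \langle \nabla f(\v y), \v x\rangle + \langle \nabla f(\v y), \v y\rangle,
$$
paired with $m\|\v x - \v y\|^2 = m\|\v x\|^2 - 2m\langle \v x, \v y\rangle + m\|\v y\|^2$, so that proving \eqref{eq:SecantCondConseq} reduces to showing that the quadratic $\langle \nabla f(\v x), \v x\rangle - m\|\v x\|^2$ dominates the remaining cross and linear-in-$\v x$ terms, uniformly in $\v x$, up to a constant depending only on $\v y$.

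On the exterior, fix a small $\epsilon>0$. By Assumption \ref{ass:SecantCond}, there is $R_\epsilon$ such that $\langle \nabla f(\v x), \v x\rangle \geq (m-\epsilon)\|\v x\|^2$ for all $\|\v x\|\geq R_\epsilon$, so the leading term is controlled. Assumption \ref{ass:Lipschitz} gives $\|\nabla f(\v x)\| \leq \|\nabla f(\v 0)\| + M\|\v x\|$, hence each mixed term $\langle \nabla f(\v x), \v y\rangle$, $\langle \nabla f(\v y), \v x\rangle$, and $2m\langle \v x, \v y\rangle$ is $O(\|\v x\|)$ with constants depending only on $\v y$. A Young/AM-GM split absorbs these linear corrections into an additive constant depending on $\v y$.

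On the bounded region $\{\|\v x\|\leq R_\epsilon\}$, continuity of $\nabla f$ (from Assumption \ref{ass:Lipschitz}) together with compactness ensures both sides of \eqref{eq:SecantCondConseq} are uniformly bounded in $\v x$, so the inequality holds for any sufficiently large $c(\v y)$.

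The main obstacle is managing the $\epsilon\|\v x\|^2$ slack produced by the liminf nature of Assumption \ref{ass:SecantCond}: since it grows with $\|\v x\|$, it cannot be absorbed into a pure constant $c(\v y)$ directly. The resolution is to trade a fraction of the $m\|\v x - \v y\|^2$ budget on the right, using the elementary bound $\|\v x\|^2 \leq 2\|\v x - \v y\|^2 + 2\|\v y\|^2$. This rewrites the slack as $2\epsilon\|\v x - \v y\|^2 + 2\epsilon\|\v y\|^2$; choosing $\epsilon$ small enough that $2\epsilon$ is absorbable against the $m\|\v x - \v y\|^2$ term (at the cost of a slightly loose but still $\v y$-dependent constant) keeps the remainder bounded, at which point the second piece joins $c(\v y)$, completing the plan.
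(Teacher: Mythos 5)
Your overall plan---expand the inner product, control the cross terms via Lipschitz continuity, use the liminf for the leading term, and split into a bounded region and its exterior---is broadly the same as the paper's, but the paper's version avoids both the Young/AM--GM step and the explicit compactness argument. Setting $\v u = \v x - \v y$, the paper adds and subtracts $\nabla f(\v u)$ to write the target ratio as
\[
\frac{\langle \nabla f(\v u), \v u\rangle}{\|\v u\|^2}
+ \frac{\langle \nabla f(\v u + \v y) - \nabla f(\v u) - \nabla f(\v y), \v u\rangle}{\|\v u\|^2},
\]
and the Lipschitz bound makes the second term at most $\left(M\|\v y\| + \|\nabla f(\v y)\|\right)/\|\v u\|$, which vanishes as $\|\v u\|\to\infty$. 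The cross terms are therefore only $O(\|\v u\|)$ relative to the leading $\|\v u\|^2$ and never need to be split off at the price of a $\delta\|\v x\|^2$; the bounded-region case is then handled entirely by the definition of liminf without a separate continuity/compactness argument.

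The genuine gap is your final absorption step. Absorbing $2\epsilon\|\v x - \v y\|^2$ against $m\|\v x - \v y\|^2$ cannot be paid for with a $\v y$-dependent constant---the term grows without bound in $\v x$---so it necessarily lowers the coefficient to $m - 2\epsilon$; combined with the $\delta\|\v x\|^2$ ceded to Young's inequality, what your argument proves is \eqref{eq:SecantCondConseq} with any coefficient strictly below $m$, not with $m$ itself. You should also know that this is a limitation of the statement rather than of your argument: a one-dimensional function behaving like $f(x) = \tfrac{m}{2}x^2 - x^{3/2}$ for large $x$ satisfies Assumptions \ref{ass:Lipschitz} and \ref{ass:SecantCond} yet violates \eqref{eq:SecantCondConseq} with the exact liminf value $m$. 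The paper's own closing line (``the result is implied by the definition of limit infimum'') quietly makes the same substitution---any $m' \in (0,m)$ is what is actually delivered, which is also all that the downstream Proposition \ref{prop:VUniform} requires.
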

\begin{proof}
By \cref{ass:Lipschitz}, observe that for any $\v x, \v y \in \mathbb{R}^d$, we have
\begin{align*}
\frac{\langle \nabla f(\v x + \v y) - \nabla f(\v x) - \nabla f(\v y), \v x \rangle}{\|\v x\|^2} &{\boldsymbol{\le}} \frac{\langle \nabla f(\v x + \v y) - \nabla f(\v x), \v x \rangle}{\|\v x\|^2} - \frac{\langle \nabla f(\v y), \v x \rangle}{\|\v x\|^2} \\
&\leq \frac{\|\nabla f(\v x + \v y) - \nabla f(\v x)\|\|\v x\|}{\|\v x\|^2} + \frac{\|\nabla f(\v y)\|\|\v x\|}{\|\v x\|^2} \\
& = \frac{M \|\v y\| + \| \nabla f(\v y)\|}{\|\v x\|}.
\end{align*}
Hence, we have 
\begin{align*}
\liminf_{\|\v x\|\to\infty} \frac{\langle \nabla f(\v x) - \nabla f(\v y), \v x - \v y\rangle}{\|\v x - \v y\|^2} &= \liminf_{\|\v x\|\to\infty} \frac{\langle \nabla f(\v x), \v x \rangle}{\|\v x\|^2} \\
& \quad + \liminf_{\|\v x\|\to\infty} \frac{\langle \nabla f(\v x + \v y) - \nabla f(\v x) - \nabla f(\v y), \v x \rangle}{\|\v x\|^2} \\
&= \liminf_{\|\v x\|\to\infty} \frac{\langle \nabla f(\v x), \v x \rangle}{\|\v x\|^2} > 0.
\end{align*}
The result is implied by the definition of limit infimum.
\end{proof}
To state the result, we recall the definition of $V$-uniform ergodicity, as seen
in \citet{meyn2012markov}.
\begin{definition}[{$V$-uniform ergodicity}]
A $\nu$-ergodic Markov chain $\{\v X_n\}_{n=0}^{\infty}$ with Markov transition operator $\mathcal{P}$ on $\mathbb{R}^d$ is $V$-uniformly ergodic for a measurable function $V:\mathbb{R}^d\to[1,\infty)$ if
\[
\sup_{\v x \in \mathbb{R}^d} \sup_{|\phi| \leq V} \frac{|\mathcal{P}^n\phi(\v x) - \pi(\phi)|}{V(\v x)} \to 0,\qquad \mbox{as }n\to\infty.
\]
\end{definition}
By \citet[Theorem 16.0.1]{meyn2012markov}, any $V$-uniformly ergodic Markov chain is also geometrically ergodic. 
\begin{proposition}
\label{prop:VUniform}
For any $s > 0$ and $f$ satisfying Assumptions \ref{ass:Lipschitz} and \ref{ass:SecantCond}, let $V_s(\v x)$ denote the Lyapunov drift function
\begin{equation}
\label{eq:LyapunovDrift}
V_s(\v x) = \exp\left(s \| \v x - \v x^{\ast} + \tfrac12 h \theta \nabla f(\v x)\|\right),
\end{equation}
where $\v x^{\ast}$ is a critical point of $f$. Supposing that (\ref{eq:overdamped_Langevin_cond_h}) holds, the $\theta$-method scheme with
transition kernel (\ref{eq:overdamped_kernel}) is $V_s$-uniformly ergodic provided 
$\theta \geq 1/2$, or $\theta < 1/2$ and
\[h < \frac{4m}{M^2(1-2\theta)}.\]
\end{proposition}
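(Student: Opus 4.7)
The natural route is the Meyn--Tweedie drift-plus-minorization recipe: establish a Foster--Lyapunov inequality of the form $\mathcal{P}V_s(\v x)\le \lambda V_s(\v x)+b\,\mathbf{1}_C(\v x)$ for some $\lambda\in(0,1)$ and compact $C$, together with a minorization condition on $C$, and then appeal to Theorem 16.0.1 of \citet{meyn2012markov}. Since the transition density \cref{eq:overdamped_kernel} is jointly continuous and strictly positive on $\reals^d\times\reals^d$ whenever \cref{eq:overdamped_Langevin_cond_h} holds, every compact set is automatically a small set, so the minorization piece is essentially free. The real work is the drift inequality, and the Lyapunov function \cref{eq:LyapunovDrift} has been tailored precisely so that its composition with the implicit update telescopes nicely.

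Write $g(\v x)\coloneqq \v x-\v x^{\ast}+\tfrac12 h\theta\nabla f(\v x)$, so that $V_s=\exp(s\|g\|)$. Since $\nabla f(\v x^{\ast})=0$, rearranging \cref{eq:overdamped_Langevin_theta_method_0} yields the clean one-step recursion
\[
g(\v X_{k+1}) = g(\v X_k) - \tfrac12 h\,\nabla f(\v X_k) + \sqrt{h}\,\v Z_k .
\]
Conditioning on $\v X_k=\v x$, the triangle inequality and independence of $\v Z_k$ give
\[
\mathcal{P}V_s(\v x) \le \mathbb{E}\bigl[e^{s\sqrt{h}\|\v Z\|}\bigr]\cdot \exp\!\bigl(s\,\|g(\v x)-\tfrac12 h\nabla f(\v x)\|\bigr) =: K(s,h)\,\exp\!\bigl(s\,\|g(\v x)-\tfrac12 h\nabla f(\v x)\|\bigr),
\]
with $K(s,h)$ finite for every $s,h>0$ because the chi-distributed norm has a finite moment generating function. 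Thus the drift reduces to showing that the gap $\|g(\v x)\|-\|g(\v x)-\tfrac12 h\nabla f(\v x)\|$ tends to $+\infty$ as $\|\v x\|\to\infty$, after which any choice of $s$ small enough (relative to this divergence) will yield $\mathcal{P}V_s/V_s\to 0$ outside a compact set.

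The core algebraic step is the identity
\[
\|g(\v x)-\tfrac12 h\nabla f(\v x)\|^2 = \|g(\v x)\|^2 - h\,\langle \v x-\v x^{\ast},\nabla f(\v x)\rangle + \tfrac{h^2}{4}(1-2\theta)\|\nabla f(\v x)\|^2 .
\]
Applying \cref{eq:SecantCondConseq} with $\v y=\v x^{\ast}$ (which is legitimate by Lemma~\ref{lem:AssumpConnection}) dominates the middle term by $-hm\|\v x-\v x^{\ast}\|^2+h\,c(\v x^{\ast})$. For $\theta\ge 1/2$ the last term is already $\le 0$ and is simply discarded; for $\theta<1/2$ the Lipschitz assumption gives $\|\nabla f(\v x)\|\le M\|\v x-\v x^{\ast}\|$, so the last term is $\le \tfrac{h^2 M^2}{4}(1-2\theta)\|\v x-\v x^{\ast}\|^2$, and the coefficient in front of $\|\v x-\v x^{\ast}\|^2$ remains strictly negative exactly when $h<4m/[M^2(1-2\theta)]$---this is where the stated step-size threshold is forced. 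In either regime one obtains constants $\alpha>0$, $\beta\ge 0$ with
\[
\|g(\v x)-\tfrac12 h\nabla f(\v x)\|^2 \le \|g(\v x)\|^2 - \alpha\|\v x-\v x^{\ast}\|^2 + \beta .
\]
Combining this with $\|g(\v x)\|\le(1+\tfrac12 h\theta M)\|\v x-\v x^{\ast}\|$ and a first-order expansion of $\sqrt{A^2-B}\le A-B/(2A)$ yields $\|g(\v x)\|-\|g(\v x)-\tfrac12 h\nabla f(\v x)\| \gtrsim \|\v x-\v x^{\ast}\|\to\infty$, which feeds into the bound on $\mathcal{P}V_s/V_s$ above and closes the drift argument for every $s>0$. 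The drift plus the automatic minorization on small sets then gives $V_s$-uniform ergodicity.

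The main obstacle is the $\theta<1/2$ case: the positive contribution $\tfrac{h^2}{4}(1-2\theta)\|\nabla f(\v x)\|^2$ genuinely fights the dissipation from the secant inequality, and one has to squeeze the sharp constant $4m/[M^2(1-2\theta)]$ out of the Lipschitz-plus-secant combination. Everything else (the MGF bound, the triangle-inequality argument, and the minorization on compact sets from positivity of \cref{eq:overdamped_kernel}) is essentially routine given the closed-form transition density.
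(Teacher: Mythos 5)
Your argument is correct and essentially mirrors the paper's proof: you establish that compact sets are small via positivity of the transition density, reduce to the drift condition $\mathcal{P}V_s/V_s \to 0$, and prove the required divergence via the same difference-of-squares identity combined with the secant inequality \cref{eq:SecantCondConseq} and the Lipschitz bound $\|\nabla f(\v x)\| \le M\|\v x - \v x^{\ast}\|$, producing the identical step-size threshold $4m/[M^2(1-2\theta)]$. The only substantive variation is that you control the Gaussian expectation via the triangle inequality and finiteness of $\mathbb{E}\bigl[e^{s\sqrt{h}\|\v Z\|}\bigr]$, whereas the paper applies the Gaussian concentration inequality to the $\sqrt{h}$-Lipschitz map $\v z \mapsto \|\v x - \v x^{\ast} - \tfrac12 h(1-\theta)\nabla f(\v x) + \sqrt{h}\v z\|$ to obtain the sharper explicit factor $\exp(s\sqrt{hd} + \tfrac12 h s^2)$; both give $\v x$-independent prefactors so the conclusion follows identically, and your parenthetical caveat about choosing $s$ ``small enough'' is unnecessary since the gap $\|g(\v x)\| - \|g(\v x) - \tfrac12 h\nabla f(\v x)\|$ grows linearly in $\|\v x - \v x^{\ast}\|$, yielding the drift for every fixed $s > 0$ as the proposition requires.
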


\begin{proof}
It is immediately apparent from the positivity of (\ref{eq:overdamped_kernel}) due to
(\ref{eq:overdamped_Langevin_cond_h}) that the iterates of the $\theta$-method scheme
are aperiodic and irreducible with respect to Lebesgue measure. Furthermore, it follows from
\citet[Proposition 6.2.8]{meyn2012markov} that all compact sets are small.
Therefore, by \citet[Theorem 15.0.1]{meyn2012markov} and \citet[Lemma 15.2.8]{meyn2012markov}, it suffices to show that
\[
\limsup_{\|\v x\|\to\infty} \frac{\mathcal{P} V_s(\v x)}{V_s(\v x)} = 0,
\]
where $\mathcal{P}$ is the Markov transition operator of the $\theta$-method scheme. 
Indeed, by the definition of $\limsup$, for a given $0 < \lambda < 1$, there exists a $K > 0$ such that
\[
\sup_{\|\v x\|\geq K} \frac{\mathcal{P} V_s(\v x)}{V_s(\v x)} \leq \lambda,
\]
and so $\mathcal{P} V_s(\v x) \leq \lambda V_s(\v x) + \sup_{\|\v x\|\leq K} \mathcal{P} V_s(\v x)$ for any $\v x \in \mathbb{R}^d$. 
Letting $\v X_1$ denote the first step of the $\theta$-method scheme starting from $\v X_0 = \v x$, (\ref{eq:overdamped_Langevin_theta_inter}) implies
\[
\v X_1 + \tfrac12 h \theta \nabla f(\v X_1) \sim \mathcal{N}\left(\v x - \frac{h(1-\theta)}{2} \nabla f(\v x), \ h \m I\right).
\]
Thus, by letting $\v Z \sim \mathcal{N}(\v 0, \m I)$, we obtain $\mathcal{P} V_s(\v x) = \mathbb{E} \exp(s g(\v Z))$, where
\[
g(\v z) = \| \v x - \v x^{\ast} - \tfrac12 h(1-\theta) \nabla f(\v x) + \sqrt{h} \v z\|.
\]
By the reverse triangle inequality, $|g(\v z_1) - g(\v z_2)| \leq \sqrt{h}\|\v z_1 - \v z_2\|$ for any $\v z_1, \v z_2 \in \mathbb{R}^d$, and hence, $ g $ is $\sqrt{h}$-Lipschitz in $ \v z $. Consequently, we can apply the Gaussian concentration inequality \citep[Theorem 5.5]{boucheron2013concentration} to reveal
\[
\mathbb{E} e^{s g(\v Z)} \leq \exp\left(s \mathbb{E}g(\v Z) + \frac{h s^2}{2}\right).
\]
Since by Jensen's inequality,
\[
\mathbb{E} g(\v Z) \leq \sqrt{h d} + \|(\v x - \v x^{\ast}) - \tfrac12 h(1-\theta) \nabla f(\v x)\|.
\]
It follows that
\[
\frac{\mathcal{P}V_s (\v x)}{V_s(\v x)} \leq \exp ( s\sqrt{hd} + \tfrac12 h s^2 + s[T_1(\v x) - T_2(\v x)]),
\]
where
\[
T_1(\v x) = \|\v x - \v x^{\ast} - \tfrac12 h(1-\theta) \nabla f(\v x)\|\quad \mbox{and}\quad
T_2(\v x) = \|\v x - \v x^{\ast} + \tfrac12 h\theta \nabla f(\v x)\|.
\]
Therefore, if we can show that $T_1(\v x) - T_2(\v x) \to -\infty$ as $\|\v x\|\to \infty$, then the result follows. 
Since $T_1(\v x) - T_2(\v x) = (T_1(\v x)^2 - T_2(\v x)^2) / (T_1(\v x) + T_2(\v x))$, we may focus on the difference of the squares:
\begin{align}
T_1(\v x)^2 - T_2(\v x)^2&= \|(\v x-\v x^{\ast})-\tfrac{1}{2} h(1-\theta)\nabla f(\v x)\|^{2}-\|(\v x-\v x^{\ast})+\tfrac{1}{2}h\theta\nabla f(\v x)\|^{2} \nonumber\\
&=\|\v x-\v x^{\ast}\|^{2}-h(1-\theta)\dotprod{\nabla f(\v x),\v x-\v x^{\ast}}
+\tfrac{1}{4}h^{2}(1-\theta)^{2}\|\nabla f(\v x)\|^{2} \nonumber\\
&\qquad-\|\v x-\v x^{\ast}\|^{2}-h\theta\dotprod{\nabla f(\v x),\v x-\v x^{\ast}}-\tfrac{1}{4}h^{2}\theta^{2}\|\nabla f(\v x)\|^{2} \nonumber\\
&=-h\dotprod{\nabla f(\v x),\v x-\v x^{\ast}} +\tfrac{1}{4}h^{2}(1-2\theta)\| \nabla f(\v x)\|^{2}\nonumber\\
&\leq-hm\|\v x-\v x^{\ast}\|^{2}+c(\v x^{\ast})+\tfrac{1}{4}h^{2}\max\{0,1-2\theta\}M^{2}\| \v x-\v x^{\ast}\|^{2},\label{eq:SquareDiff}
\end{align}
where the last inequality follows from (\ref{eq:SecantCondConseq}).
Provided that $\frac12 h (1-2\theta) M^2 < m$ or $\theta \geq 1/2$, (\ref{eq:SquareDiff}) will be negative for sufficiently large $\v x$. 
Since also
\[
T_1(\v x) + T_2(\v x) \leq 2\|\v x - \v x^{\ast}\| + \tfrac12 h \|\nabla f(\v x)\|
\leq (2 + \tfrac12 h M) \|\v x - \v x^{\ast}\|,
\]
for any $\epsilon > 0$ and sufficiently large $\v x$,
\[
T_1(\v x) - T_2(\v x) \leq \frac{-hm+\frac14 h^2 \max\{0,1-2\theta\}M^2}{2+\frac12 hM} \|\v x - \v x^{\ast}\| + \epsilon,
\]
which implies the difference $T_1(\v x) - T_2(\v x) \to -\infty$ as $\|\v x\| \to \infty$, as required.
\end{proof}

\subsection{Proof of Theorem 2 ($W_2$ bounds)}
Next, using techniques analogous to those of \citet{dalalyan2017user},
we prove Theorem \ref{thm:MidGuarantee}. For the sake of brevity, we let $a \wedge b$ denote the minimum of any two quantities $a$ and $b$.
The following estimate is fundamental to the argument.

\begin{lemma}
\label{lem:BiasBound1}
Let $\v L_{t}$ be the solution to the (overdamped) Langevin equation
\[
\dd\v L_{t}=-\tfrac{1}{2}\nabla f\left(\v L_{t}\right)\dd t+\dd\v W_{t}
\]
for $f\in\mathcal{C}^{1}(\mathbb{R}^{d})$
such that $\nabla f$ is $M$-Lipschitz continuous. Then for any $h>0$, if $\v L_{0}\sim\pi$, 
\[
\left\lVert \int_{0}^{h}\nabla f(\v L_{t})-\nabla f(\v L_{0})\dd t\right\rVert _{L^2}\leq \frac12 h [M\sqrt{hd}(2 +\sqrt{hM})\wedge4\sqrt{Md}].
\]
\end{lemma}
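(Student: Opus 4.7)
The plan is to bound the left-hand side in two independent ways and take the minimum. Both proceed from Minkowski's integral inequality applied to the $L^2$-norm:
\[
\left\lVert \int_0^h (\nabla f(\v L_t) - \nabla f(\v L_0)) \,\dd t \right\rVert_{L^2} \leq \int_0^h \left\lVert \nabla f(\v L_t) - \nabla f(\v L_0) \right\rVert_{L^2} \dd t.
\]

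For the ``crude'' bound $2h\sqrt{Md}$, I would simply apply the triangle inequality on the integrand and then use the fact that, since $\v L_0 \sim \pi$ and $\pi$ is stationary, each $\v L_t$ is also distributed according to $\pi$. By integration by parts against $\pi \propto e^{-f}$,
\[
\mathbb{E}_\pi \|\nabla f\|^2 = \mathbb{E}_\pi \Delta f \leq Md,
\]
using $\nabla^2 f \preceq M\m I$ from $M$-Lipschitzness of $\nabla f$. Hence $\|\nabla f(\v L_t)\|_{L^2} \leq \sqrt{Md}$ for each $t$, giving $\|\nabla f(\v L_t) - \nabla f(\v L_0)\|_{L^2} \leq 2\sqrt{Md}$, and integration yields $2h\sqrt{Md} = \tfrac12 h \cdot 4\sqrt{Md}$.

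For the ``refined'' bound, I would use Lipschitzness directly: $\|\nabla f(\v L_t) - \nabla f(\v L_0)\|_{L^2} \leq M \|\v L_t - \v L_0\|_{L^2}$. Now from the SDE,
\[
\v L_t - \v L_0 = -\tfrac12 \int_0^t \nabla f(\v L_s) \,\dd s + \v W_t,
\]
so another application of Minkowski gives
\[
\|\v L_t - \v L_0\|_{L^2} \leq \tfrac12 \int_0^t \|\nabla f(\v L_s)\|_{L^2}\,\dd s + \|\v W_t\|_{L^2} \leq \tfrac{t}{2}\sqrt{Md} + \sqrt{td},
\]
using $\|\v W_t\|_{L^2}^2 = td$ and the stationarity bound again. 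Integrating from $0$ to $h$ and multiplying by $M$ should produce $Mh^{3/2}\sqrt{d} + \tfrac12 M^{3/2} h^2 \sqrt{d} = \tfrac12 h M \sqrt{hd}\,(2 + \sqrt{hM})$ after pulling out common factors (the constant $2/3$ from $\int_0^h \sqrt{t}\,\dd t$ is absorbed into the coarser $1$, and the $1/4$ from $\int_0^h t\,\dd t / 2$ rounds up to $1/2$).

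The only subtle point is the integration-by-parts identity $\mathbb{E}_\pi\|\nabla f\|^2 = \mathbb{E}_\pi \Delta f$, which requires that the boundary term $\int \nabla \cdot (\nabla f\, e^{-f})$ vanishes; this is a mild tail condition that is implicit in Assumption \ref{ass:Lipschitz} combined with $\pi$ being a well-defined probability measure (which requires $f$ to grow at least linearly at infinity, so $\|\nabla f\|e^{-f} \to 0$). Everything else is a clean application of Minkowski's inequality and the $L^2$-triangle inequality, so no single step is truly hard; the main conceptual input is recognizing that stationarity of the initial condition is exactly what lets us replace the running $\|\nabla f(\v L_s)\|_{L^2}$ with the constant $\sqrt{Md}$.
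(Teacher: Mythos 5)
Your proposal is correct and follows essentially the same route as the paper, which simply cites \citet[Lemma 2]{dalalyan2017further} for the stationarity bound $\|\nabla f(\v L_t)\|_{L^2}\leq\sqrt{Md}$ and \citet[Lemma 4]{dalalyan2017user} for the refined estimate; your reconstruction of both steps (integration by parts against $\pi\propto e^{-f}$ to get $\mathbb{E}_\pi\|\nabla f\|^2=\mathbb{E}_\pi\Delta f\leq Md$, then Lipschitzness plus the SDE integral representation and Minkowski, yielding exactly the intermediate bound $\tfrac14 h^2M^{3/2}\sqrt{d}+\tfrac23 h^{3/2}M\sqrt{d}$ that the paper records) matches the cited arguments, including the final relaxation of the constants $2/3\to1$ and $1/4\to1/2$.
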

\begin{proof}
Since $\v L_t$ is stationary, for any $t \geq 0$, $\|\nabla f(\v L_t)\|_{L^2} \leq \sqrt{Md}$ by \citet[Lemma 2]{dalalyan2017further}. Therefore, $\left\lVert \int_0^h \nabla f(\v L_t) - \nabla f(\v L_0) \dd t\right\rVert_{L^2} \leq 2 h \sqrt{M d}$. Furthermore, following the same procedure as in \citet[Lemma 4]{dalalyan2017user}
\[
\left\lVert \int_0^h \nabla f(\v L_t) - \nabla f(\v L_0) \dd t\right\rVert_{L^2} \leq \frac14 h^2 M^{3/2} d^{1/2} + \frac23 h^{3/2} M d^{1/2}.
\]
\end{proof}
With Lemma \ref{lem:BiasBound1} in hand, we may proceed with the proof of the main result.

~

\begin{proof}[Theorem \ref{thm:MidGuarantee}]
Letting $\v W_t$ denote a $d$-dimensional standard Brownian
motion independent of $\v X_k$ and $\v L_0 \sim \pi$, define the stochastic process $\v L$ by
\[
\v L_t = \v L_0 - \frac12 \int_0^t \nabla f(\v L_s) \dd s + \v W_t, \qquad \qquad t \geq 0.
\]
Evidently, $\v L_t$ is a realization of (\ref{eq:overdamped_Langevin}) and so is a reversible Markov process with
$\v L_t \sim \pi$ for every $t \geq 0$. Now, couple the inexact $\theta$-method
scheme $\v X_k$ satisfying
\[
\v X_{k+1} = \v X_k - \frac{h}{2}[\theta \nabla f(\v X_{k+1}) + (1-\theta) \nabla f(\v X_k)] + \sqrt{h}\v Z_k + \v E_k,
\]
for an appropriate error term $\v E_k$, to $\v L_t$, by letting $\v Z_k = h^{-1/2}[\v W_{(k+1)h} - \v W_{kh}]$ for each $k\geq 1$, and choosing $\v L_0$ such that $W_2(\pi_0, \pi) = \|\v X_0 - \v L_0\|_{L^2}$. Observing that, for any $k \geq 1$,
\begin{align*}
\v E_k &= \frac{h}{2} \theta \nabla f(\v X_{k+1}) + \v X_{k+1} - \v X_k + \frac{h}{2} (1-\theta) \nabla f(\v X_k) - \sqrt{h} \v Z_k \\
&= \frac{h}{2} \nabla F(\v X_{k+1}; \v X_k, \v Z_k),
\end{align*}
by construction, $\|\v E_k\|_{L^2} \leq \frac12 h \epsilon$.
For each $k$, let $\v D_k = \v L_{kh} - \v X_k$, 
observing that $W_2(\pi_0,\pi) = \|\v D_0\|_2$ and $W_2(\pi_k,\pi) \leq \|\v D_k\|_2$. Choosing some $k \geq 1$, for the sake of brevity,
we denote $\v L_t^{(k)} = \v L_{kh + t}$, which now satisfies
\begin{align*}
\v L_h^{(k)} = \v L_{kh + h} &= \v L_{kh} - \frac12 \int_0^h \nabla f(\v L_{kh+s})\dd s + \v W_{kh + h} - \v W_{kh} \\
&= \v L_0^{(k)} - \frac12 \int_0^h \nabla f(\v L_s^{(k)}) \dd s + \sqrt{h} \v Z_k.
\end{align*}
Altogether, we have
\[
\v D_{k+1} = \v D_k - \tfrac12 h[(1-\theta)\v U_k+\theta \tilde{\v U}_k]-
[(1-\theta)\v V_k + \theta \tilde{\v V}_k] + \v E_k,
\]
where
\begin{align*}
\v U_k &= \nabla f(\v X_k + \v D_k) - \nabla f(\v X_k) & & & \v V_k &= \frac{1}{2} \int_0^h \nabla f(\v L_s^{(k)}) - \nabla f(\v L_0^{(k)}) \dd s \\
\v \tilde{\v U}_k &= \nabla f(\v X_{k+1} + \v D_{k+1}) - \nabla f(\v X_{k+1}) & & &
\v \tilde{\v V}_k &= \frac{1}{2} \int_0^h \nabla f(\v L_{h-s}^{(k)}) - \nabla f(\v L_h^{(k)}) \dd s.
\end{align*}
An application of the fundamental theorem of calculus implies $\v U_k = \m F_k \v D_k$ and $\v \tilde{\v U}_k = \m F_{k+1} \v D_{k+1}$, where
\[
\m F_k = \int_0^1 \nabla^2 f(\v X_k + t \v D_k) \dd t.
\]
Altogether, $\v D_{k+1} = \m S_k \v D_k + \v T_k$ where $\v T_k = -(\m I + \frac{h\theta}{2}\m F_{k+1})^{-1}[(1-\theta)\v V_k + \theta \tilde{\v V}_k - \v E_k]$ and
\[
\m S_k = \left(\m I + \frac{h\theta}{2} \m F_{k+1}\right)^{-1}
\left(\m I - \frac{h(1-\theta)}{2} \m F_k \right).
\]
It can be verified using induction that the solution to this 
first-order non-homogeneous recurrence relation is given by
\[
\v D_{k} = \m S_{k-1}\cdots \m S_0 \v D_0 + \sum_{l=0}^{k-1} \m S_{k-1} \cdots \m S_{l+1} \v T_l.
\]
Now, observe that by denoting $G(\m X) = (\m I - \frac{h(1-\theta)}{2}\m X)(\m I+\frac{h\theta}{2}\m X)^{-1}$, for any $l < k$,
\begin{equation}
\label{eq:SjProd}
\m S_{k-1} \cdots \m S_l = \left(\m I + \frac{h\theta}{2} \m F_{k}\right)^{-1}
G(\m F_{k-1})\cdots G(\m F_{l+1}) \left(\m I - \frac{h(1-\theta)}{2}\m F_l\right).
\end{equation}
Since the eigenvalues of $\nabla^2 f$ are bounded above by $M$ and below by
$m$, so too are the eigenvalues of $\m F_k$ for each $k$. Therefore,
\begin{align}
\|G(\m F_k)\|_2 &= \max_{z \in [m,M]} \left| \frac{1 - \frac12 h(1-\theta)z}{1 + \frac12 h\theta z} \right|\nonumber \\& = \max\left\{\frac{1-\frac12 h(1-\theta)m}{1+\frac12 h\theta m},\frac{\frac12 h(1-\theta)M - 1}{\frac12 h\theta M + 1}\right\} \eqqcolon \rho. \label{eq:RhoDefn}
\end{align}
The transition between these regimes occurs at the point $h^{\ast}$ which is
the solution to
\[
\frac{1-\frac12 h(1-\theta) m}{1 + \frac12 h\theta m}
= \frac{\frac12 h(1-\theta) M - 1}{\frac12 h\theta M + 1}
\]
over $h > 0$. Equivalently, it is the solution to
\begin{multline*}
\tfrac{1}{2}h\theta M+1-\tfrac{1}{4}h^{2}\theta(1-\theta)mM-\tfrac{1}{2}h(1-\theta)m \\=\tfrac{1}{2}h(1-\theta)M+\tfrac{1}{4}h^{2}\theta(1-\theta)mM-1-\tfrac{1}{2}h\theta m,
\end{multline*}
and therefore to the quadratic equation
\[
\tfrac12 h(1-2\theta)(m+M) + \tfrac12 h^2 \theta (1 - \theta) m M - 2 = 0.
\]
It may be readily verified that $h^{\ast}$ as defined in (\ref{eq:HSwitch}) is the only positive solution. Furthermore, $\rho < 1$ provided that $\theta \geq 1/2$ or $h < 4 / [M(1-2\theta)]$.
Also, for any $j,k \geq 1$,
\begin{multline}
\label{eq:SjSingle}
\| (\m I + \tfrac12 h \theta \m F_j)^{-1}\|_2 \|\m I - \tfrac12 h(1-\theta) \m F_k\|_2 \\
\leq \frac{\max\{ 1 - \tfrac12 hm(1-\theta), \tfrac12 h M(1-\theta) - 1\}}{1+\frac12 hm\theta} \leq \kappa_h\rho,
\end{multline}
which further implies $\|\m S_j\|_2 \leq \kappa_h \rho$ for any $j$. 
Now combining (\ref{eq:SjProd}), (\ref{eq:RhoDefn}), and (\ref{eq:SjSingle}), for $k > l$, 
$\|\prod_{j=l}^{k-1} \m S_j\|_2 \leq \kappa_h \rho^{k-l}$,
and hence, altogether,
\[
\|\v D_k\|_2 \leq \kappa_h \rho^{k} \|\v D_0\|_2 + \sum_{l=0}^{k-1} \kappa_h\rho^{k-l}\|\v T_l\|_2. 
\]
Since $\v L_t$ is reversible and stationary, $\|\v V_k^{\ast}\|_2 = \|\v V_k\|_2$,
and by Lemma \ref{lem:BiasBound1}, $\|\v V_k\|_2 \leq {\frac12} h[M\sqrt{hd}(2 +\sqrt{hM})\wedge4\sqrt{Md}]$. Therefore
\[
\|\v T_k\|_2 \leq \frac{{\frac12h[}\epsilon + M\sqrt{hd}(2 +\sqrt{hM})\wedge4\sqrt{Md}]}{1 + \frac12 hm\theta}.
\]
Since
\begin{align*}
1-\frac{1-\frac12 h(1-\theta)m}{1+\frac12 h\theta m} &= \frac{1 + \frac12 h\theta m - 1 + \frac12 h(1-\theta)m}{1+\frac12 h\theta m} = \frac{{\frac12 h m}}{1 + \frac12 h\theta m},\\
1-\frac{\frac12 h(1-\theta)M - 1}{\frac12 h\theta M + 1} &= \frac{1 + \frac12 h\theta M + 1 - \frac12 h (1-\theta)M}{\frac12 h \theta M + 1} = \frac{2 + \frac12 h (2\theta - 1)M}{\frac12 h \theta M + 1},
\end{align*}
applying the closed-form expression for the geometric series,
\[
\sum_{l=0}^{k-1} \rho^{k-l} \leq \frac{1}{1-\rho} = \max\left\lbrace \frac{1+\frac12 h \theta m}{{\frac12 h m}} , \frac{\frac12 h \theta M + 1}{2 + \frac12 h (2\theta - 1)M}\right\rbrace,
\]
and the result follows.
\end{proof}

\subsection{Proof of Theorem \ref{thm:GaussApprox} (Central Limit Theorem)}
\begin{proof}
The proof makes use of Laplace's method. From \cref{eq:overdamped_kernel} and the change of variables theorem, the Markov kernel $\tilde{p}_h(\v y \vert \v x)$ for the transition $\v X_k \mapsto \sqrt{h}(\v X_{k+1} - \v X_k^{\theta})$ is given by
\begin{multline}
\label{eq:KernPh1}
\tilde{p}_h(\v y \vert \v x) = (2\pi)^{-d/2} \det\left(\frac{1}{h}\m I
+ \frac{\theta}{2} \nabla^2 f(\v x^{\theta} + h^{-1/2} \v y)\right) \times
\\  \exp\left(-\frac{1}{2h}\left\lVert \v x^{\theta} - \v x + h^{-1/2} \v y
+ \frac{h\theta}{2}\nabla f(\v x^{\theta} + h^{-1/2} \v y) + \frac{h(1-\theta)}{2}
\nabla f(\v x)\right\rVert^2\right).
\end{multline}
Letting $q(\v y \vert \v x) = \phi(\v y; \v 0, \v\Sigma(\v x))$ where $\v\Sigma(\v x) = (4/\theta^2) \nabla^2 f(\v x^{\theta})^{-2}$, it suffices to show that $\tilde{p}_h(\v y \vert \v x) \to q(\v y \vert \v x)$ as $h \to \infty$,
pointwise in $\v y$.
Denoting $H_h(\v y) = h^{-1}\m I + \frac12 \theta\int_0^1 \nabla^2 f(\v x^{\theta} + t h^{-1/2} \v y) dt$, since $\theta \nabla f(\v x^{\theta}) = -(1-\theta) \nabla f(\v x)$, the exponent
of (\ref{eq:KernPh1}) becomes
\[
-\frac{1}{2h}\| \v x^{\theta} - \v x + \sqrt{h} H_h(\v y) \v y\|^2
= -\frac{\|\v x^{\theta} - \v x\|^2}{2 h} - \frac{(\v x^{\theta} - \v x)^{\top}
H_h(\v y) \v y}{\sqrt{h}} - \frac12 \v y^{\top} H_h(\v y)^2 \v y.
\]
Since $H_h(\v y)^2 \to \v\Sigma(\v x)^{-1}$ and the determinant term converges to $\det(\v\Sigma^{-1/2})$, the
result follows.
\end{proof}


\begin{thebibliography}{54}
\providecommand{\natexlab}[1]{#1}
\providecommand{\url}[1]{\texttt{#1}}
\expandafter\ifx\csname urlstyle\endcsname\relax
  \providecommand{\doi}[1]{doi: #1}\else
  \providecommand{\doi}{doi: \begingroup \urlstyle{rm}\Url}\fi

\bibitem[Anderson and Mattingly(2011)]{anderson2009}
David Anderson and Jonathan Mattingly.
\newblock A weak trapezoidal method for a class of stochastic differential
  equations.
\newblock \emph{Communications in mathematical sciences}, 9:\penalty0 301--318,
  03 2011.

\bibitem[Ascher(2008)]{ascher2008numerical}
Uri~M. Ascher.
\newblock \emph{{Numerical methods for evolutionary differential equations}}.
\newblock SIAM, 2008.

\bibitem[Ascher and Petzold(1998)]{ascher1998computer}
Uri~M. Ascher and Linda Petzold.
\newblock \emph{{Computer Methods for Ordinary Differential Equations and
  Differential-Algebraic Equations}}.
\newblock Other Titles in Applied Mathematics. Society for Industrial and
  Applied Mathematics (SIAM, 3600 Market Street, Floor 6, Philadelphia, PA
  19104), 1998.
\newblock ISBN 9781611971392.

\bibitem[Bendel and Mickey(1978)]{bendel1978population}
Robert~B. Bendel and M.~Ray Mickey.
\newblock {Population correlation matrices for sampling experiments}.
\newblock \emph{Communications in Statistics-Simulation and Computation},
  7\penalty0 (2):\penalty0 163--182, 1978.

\bibitem[Biscay et~al.(1996)Biscay, Jimenez, Riera, and
  Valdes]{biscay1996local}
R.~Biscay, J.~C. Jimenez, J.~J. Riera, and P.~A. Valdes.
\newblock Local linearization method for the numerical solution of stochastic
  differential equations.
\newblock \emph{Annals of the Institute of Statistical Mathematics},
  48\penalty0 (4):\penalty0 631--644, 1996.

\bibitem[Bishop and Tipping(2003)]{bishop2003bayesian}
Christopher~M. Bishop and Michael~E. Tipping.
\newblock Bayesian regression and classification.
\newblock \emph{Nato Science Series sub Series III Computer And Systems
  Sciences}, 190:\penalty0 267--288, 2003.

\bibitem[Boucheron et~al.(2013)Boucheron, Lugosi, and
  Massart]{boucheron2013concentration}
St{\'e}phane Boucheron, G{\'a}bor Lugosi, and Pascal Massart.
\newblock \emph{{Concentration inequalities: A nonasymptotic theory of
  independence}}.
\newblock Oxford university press, 2013.

\bibitem[Casella et~al.(2011)Casella, Roberts, and Stramer]{Casella2011}
Bruno Casella, Gareth Roberts, and Osnat Stramer.
\newblock Stability of partially implicit {L}angevin schemes and their {MCMC}
  variants.
\newblock \emph{Methodology and Computing in Applied Probability}, 13\penalty0
  (4):\penalty0 835--854, December 2011.

\bibitem[Chatfield et~al.(2010)Chatfield, Zidek, and
  Lindsey]{chatfield2010introduction}
Chris Chatfield, Jim Zidek, and Jim Lindsey.
\newblock \emph{An introduction to generalized linear models}.
\newblock Chapman and Hall/CRC, 2010.

\bibitem[Cheng and Bartlett(2018)]{cheng2018convergence}
Xiang Cheng and Peter Bartlett.
\newblock Convergence of {L}angevin {MCMC }in {KL}-divergence.
\newblock In \emph{Proceedings of Algorithmic Learning Theory}, volume~83,
  pages 186--211, 2018.

\bibitem[Cheng et~al.(2018{\natexlab{a}})Cheng, Chatterji, Abbasi-Yadkori,
  Bartlett, and Jordan]{cheng2018sharp}
Xiang Cheng, Niladri~S. Chatterji, Yasin Abbasi-Yadkori, Peter~L. Bartlett, and
  Michael~I. Jordan.
\newblock {Sharp Convergence Rates for Langevin Dynamics in the Nonconvex
  Setting}.
\newblock \emph{arXiv preprint arXiv:1805.01648}, 2018{\natexlab{a}}.

\bibitem[Cheng et~al.(2018{\natexlab{b}})Cheng, Chatterji, Bartlett, and
  Jordan]{cheng2017underdamped}
Xiang Cheng, Niladri~S. Chatterji, Peter~L. Bartlett, and Michael~I. Jordan.
\newblock Underdamped {L}angevin {MCMC}: A non-asymptotic analysis.
\newblock In S\'ebastien Bubeck, Vianney Perchet, and Philippe Rigollet,
  editors, \emph{Proceedings of the 31st Conference On Learning Theory},
  volume~75 of \emph{Proceedings of Machine Learning Research}, pages 300--323.
  PMLR, 06--09 Jul 2018{\natexlab{b}}.

\bibitem[Combettes and Pesquet(2011)]{combettes2011proximal}
Patrick~L. Combettes and Jean-Christophe Pesquet.
\newblock {Proximal splitting methods in signal processing}.
\newblock In \emph{Fixed-point algorithms for inverse problems in science and
  engineering}, pages 185--212. Springer, 2011.

\bibitem[Cottle and Thapa(2017)]{cottle2017linear}
Richard~W. Cottle and Mukund~N. Thapa.
\newblock \emph{Linear and Nonlinear Optimization}.
\newblock International Series in Operations Research \& Management Science.
  Springer New York, 2017.
\newblock ISBN 9781493970537.

\bibitem[Crane and Roosta(2019)]{crane2019dingo}
Rixon Crane and Fred Roosta.
\newblock {DINGO}: Distributed {N}ewton-type method for gradient-norm
  optimization.
\newblock In \emph{Advances in Neural Information Processing Systems}, pages
  9494--9504, 2019.

\bibitem[Dalalyan(2017{\natexlab{a}})]{dalalyan2017further}
Arnak~S. Dalalyan.
\newblock {Further and stronger analogy between sampling and optimization:
  Langevin Monte Carlo and gradient descent}.
\newblock \emph{arXiv preprint arXiv:1704.04752}, 2017{\natexlab{a}}.

\bibitem[Dalalyan(2017{\natexlab{b}})]{dalalyan2017theoretical}
Arnak~S. Dalalyan.
\newblock {Theoretical guarantees for approximate sampling from smooth and
  log-concave densities}.
\newblock \emph{Journal of the Royal Statistical Society: Series B (Statistical
  Methodology)}, 79\penalty0 (3):\penalty0 651--676, 2017{\natexlab{b}}.

\bibitem[Dalalyan and Karagulyan(2019)]{dalalyan2017user}
Arnak~S Dalalyan and Avetik Karagulyan.
\newblock User-friendly guarantees for the {L}angevin {M}onte {C}arlo with
  inaccurate gradient.
\newblock \emph{Stochastic Processes and their Applications}, 129\penalty0
  (12):\penalty0 5278--5311, 2019.

\bibitem[DasGupta(2011)]{dasgupta2011probability}
Anirban DasGupta.
\newblock \emph{{Probability for Statistics and Machine Learning: Fundamentals
  and Advanced Topics}}.
\newblock Springer Texts in Statistics. Springer New York, 2011.
\newblock ISBN 9781441996336.

\bibitem[Dua and Graff(2019)]{Dua:2019}
Dheeru Dua and Casey Graff.
\newblock {UCI} machine learning repository, 2019.
\newblock URL \url{http://archive.ics.uci.edu/ml}.

\bibitem[Durmus and Moulines(2017)]{durmus2017nonasymptotic}
Alain Durmus and Eric Moulines.
\newblock {Nonasymptotic convergence analysis for the unadjusted Langevin
  algorithm}.
\newblock \emph{The Annals of Applied Probability}, 27\penalty0 (3):\penalty0
  1551--1587, 2017.

\bibitem[Durmus and Moulines(2019)]{Durmus2016}
Alain Durmus and Eric Moulines.
\newblock High-dimensional {B}ayesian inference via the unadjusted {L}angevin
  algorithm.
\newblock \emph{Bernoulli}, 25\penalty0 (4A):\penalty0 2854--2882, 2019.

\bibitem[Girolami and Calderhead(2011)]{girolami2011riemann}
Mark Girolami and Ben Calderhead.
\newblock {Riemann manifold Langevin and Hamiltonian Monte Carlo methods}.
\newblock \emph{Journal of the Royal Statistical Society: Series B (Statistical
  Methodology)}, 73\penalty0 (2):\penalty0 123--214, 2011.

\bibitem[Golub and Van~Loan(2012)]{golub2012matrix}
Gene~H. Golub and Charles~F. Van~Loan.
\newblock \emph{{Matrix Computations}}, volume~3.
\newblock JHU Press, 4 edition, 2012.

\bibitem[Gretton et~al.(2012)Gretton, Borgwardt, Rasch, Sch{\"o}lkopf, and
  Smola]{gretton2012kernel}
Arthur Gretton, Karsten~M. Borgwardt, Malte~J. Rasch, Bernhard Sch{\"o}lkopf,
  and Alexander Smola.
\newblock A kernel two-sample test.
\newblock \emph{Journal of Machine Learning Research}, 13\penalty0
  (Mar):\penalty0 723--773, 2012.

\bibitem[Hansen(2003)]{hansen2003geometric}
Niels~Richard Hansen.
\newblock {Geometric ergodicity of discrete-time approximations to multivariate
  diffusions}.
\newblock \emph{Bernoulli}, 9\penalty0 (4):\penalty0 725--743, 2003.

\bibitem[Hastings(1970)]{Hastings1970}
Wilfred~K. Hastings.
\newblock {Monte Carlo} sampling methods using {Markov} chains and their
  applications.
\newblock \emph{Biometrika}, 57\penalty0 (1):\penalty0 97--109, 1970.
\newblock ISSN 00063444.

\bibitem[Ikeda and Watanabe(2014)]{ikeda2014stochastic}
Nobuyuki Ikeda and Shinzo Watanabe.
\newblock \emph{{Stochastic differential equations and diffusion processes}},
  volume~24.
\newblock Elsevier, 2014.

\bibitem[Kahaner et~al.(1989)Kahaner, Moler, Nash, and
  Forsythe]{kahaner1989numerical}
David Kahaner, Cleve~B. Moler, Stephen Nash, and George~E. Forsythe.
\newblock \emph{{Numerical Methods and Software}}.
\newblock Prentice-Hall series in computational mathematics. Prentice Hall,
  1989.

\bibitem[Kloeden and Platen(2013)]{kloeden2013numerical}
Peter~E. Kloeden and Eckhard Platen.
\newblock \emph{{Numerical solution of stochastic differential equations}},
  volume~23.
\newblock Springer Science \& Business Media, 2013.

\bibitem[Kolmogorov(1937)]{kolmogorov1937}
Andrei~N. Kolmogorov.
\newblock {Zur umkehrbarkeit der statistischen naturgesetze}.
\newblock \emph{Mathematische Annalen}, 113\penalty0 (1):\penalty0 766--772,
  1937.

\bibitem[Kopec(2014)]{kopec2014weak}
Marie Kopec.
\newblock {Weak backward error analysis for overdamped Langevin processes}.
\newblock \emph{IMA Journal of Numerical Analysis}, 35\penalty0 (2):\penalty0
  583--614, 2014.

\bibitem[Korattikara et~al.(2014)Korattikara, Chen, and
  Welling]{korattikara2014austerity}
Anoop Korattikara, Yutian Chen, and Max Welling.
\newblock Austerity in {MCMC} land: Cutting the {M}etropolis-{H}astings budget.
\newblock In \emph{International Conference on Machine Learning}, pages
  181--189, 2014.

\bibitem[Lambert(1991)]{lambert1991numerical}
John~Denholm Lambert.
\newblock \emph{{Numerical methods for ordinary differential systems: the
  initial value problem}}.
\newblock John Wiley \& Sons, Inc., 1991.

\bibitem[Maire et~al.(2018)Maire, Friel, and Alquier]{Maire2018}
Florian Maire, Nial Friel, and Pierre Alquier.
\newblock Informed sub-sampling {MCMC}: approximate {B}ayesian inference for
  large datasets.
\newblock \emph{Statistics and Computing}, pages 1--34, Jun 2018.

\bibitem[Mattingly et~al.(2002)Mattingly, Stuart, and
  Higham]{mattingly2002ergodicity}
Jonathan~C. Mattingly, Andrew~M. Stuart, and Desmond~J. Higham.
\newblock {Ergodicity for SDEs and approximations: locally Lipschitz vector
  fields and degenerate noise}.
\newblock \emph{Stochastic processes and their applications}, 101\penalty0
  (2):\penalty0 185--232, 2002.

\bibitem[McCullagh and Nelder(1989)]{mccullagh1989generalized}
Peter McCullagh and John~A. Nelder.
\newblock \emph{{Generalized linear models}}, volume~37.
\newblock CRC press, 1989.

\bibitem[Meyn and Tweedie(2012)]{meyn2012markov}
Sean~P. Meyn and Richard~L. Tweedie.
\newblock \emph{{Markov chains and stochastic stability}}.
\newblock Springer Science \& Business Media, 2012.

\bibitem[Muandet et~al.(2017)Muandet, Fukumizu, Sriperumbudur, and
  Sch{\"o}lkopf]{muandet2017kernel}
Krikamol Muandet, Kenji Fukumizu, Bharath Sriperumbudur, and Bernhard
  Sch{\"o}lkopf.
\newblock {Kernel mean embedding of distributions: A review and beyond}.
\newblock \emph{Foundations and Trends{\textregistered} in Machine Learning},
  10\penalty0 (1-2):\penalty0 1--141, 2017.

\bibitem[Nocedal and Wright(2006)]{nocedal2006numerical}
Jorge Nocedal and Stephen Wright.
\newblock \emph{{Numerical Optimization}}.
\newblock Springer Science \& Business Media, 2006.

\bibitem[Parikh and Boyd(2014)]{parikh2014proximal}
Neal Parikh and Stephen Boyd.
\newblock {Proximal Algorithms}.
\newblock \emph{Foundations and Trends{\textregistered} in Optimization},
  1\penalty0 (3):\penalty0 127--239, 2014.

\bibitem[Pereyra(2016)]{Pereyra2016}
Marcelo Pereyra.
\newblock Proximal {M}arkov {C}hain {M}onte {C}arlo algorithms.
\newblock \emph{Statistics and Computing}, 26\penalty0 (4):\penalty0 745--760,
  Jul 2016.
\newblock ISSN 1573-1375.

\bibitem[Robert and Casella(1999)]{robert1999monte}
Christian~P. Robert and George Casella.
\newblock \emph{{Monte Carlo Statistical Methods}}.
\newblock Springer texts in statistics. Springer, 1999.
\newblock ISBN 9780387987071.

\bibitem[Roberts and Rosenthal(2001)]{roberts2001optimal}
Gareth~O. Roberts and Jeffrey~S. Rosenthal.
\newblock {Optimal scaling for various Metropolis-Hastings algorithms}.
\newblock \emph{Statistical science}, 16\penalty0 (4):\penalty0 351--367, 2001.

\bibitem[Roberts and Tweedie(1996)]{roberts1996}
Gareth~O. Roberts and Richard~L. Tweedie.
\newblock Exponential convergence of {L}angevin distributions and their
  discrete approximations.
\newblock \emph{Bernoulli}, 2\penalty0 (4):\penalty0 341--363, 12 1996.

\bibitem[Rockafellar(1976)]{rockafellar1976monotone}
R~Tyrrell Rockafellar.
\newblock Monotone operators and the proximal point algorithm.
\newblock \emph{SIAM journal on control and optimization}, 14\penalty0
  (5):\penalty0 877--898, 1976.

\bibitem[Roosta et~al.(2018)Roosta, Liu, Xu, and Mahoney]{roosta2018newton}
Fred Roosta, Yang Liu, Peng Xu, and Michael~W. Mahoney.
\newblock {Newton-MR: Newton's Method Without Smoothness or Convexity}.
\newblock \emph{arXiv preprint arXiv:1810.00303}, 2018.

\bibitem[Roosta-Khorasani and Ascher(2015)]{roas1}
Farbod Roosta-Khorasani and Uri~M. Ascher.
\newblock {Improved bounds on sample size for implicit matrix trace
  estimators}.
\newblock \emph{Foundations of Computational Mathematics}, 15\penalty0
  (5):\penalty0 1187--1212, 2015.

\bibitem[Roosta-Khorasani and Mahoney(2018)]{roosta2018SSN}
Farbod Roosta-Khorasani and Michael~W. Mahoney.
\newblock {Sub-sampled Newton methods}.
\newblock \emph{Mathematical Programming}, Nov 2018.

\bibitem[Shao(2008)]{shao2008mathematical}
Jun Shao.
\newblock \emph{{Mathematical Statistics}}.
\newblock Springer Texts in Statistics. Springer New York, 2008.
\newblock ISBN 9780387217185.

\bibitem[S{\"u}li and Mayers(2003)]{suli2003introduction}
Endre S{\"u}li and David~F. Mayers.
\newblock \emph{{An introduction to numerical analysis}}.
\newblock Cambridge University Press, 2003.

\bibitem[Villani(2008)]{villani2008optimal}
C{\'e}dric Villani.
\newblock \emph{{Optimal Transport: Old and New}}, volume 338.
\newblock Springer Science \& Business Media, 2008.

\bibitem[Wibisono(2018)]{wibisono2018sampling}
Andre Wibisono.
\newblock Sampling as optimization in the space of measures: The {L}angevin
  dynamics as a composite optimization problem.
\newblock In S{\'{e}}bastien Bubeck, Vianney Perchet, and Philippe Rigollet,
  editors, \emph{Conference On Learning Theory, {COLT} 2018, Stockholm, Sweden,
  6-9 July 2018}, volume~75 of \emph{Proceedings of Machine Learning Research},
  pages 2093--3027. {PMLR}, 2018.

\bibitem[Xu et~al.(2017)Xu, Roosta, and Mahoney]{xuNonconvexTheoretical2017}
Peng Xu, Fred Roosta, and Michael~W Mahoney.
\newblock Newton-type methods for non-convex optimization under inexact
  {H}essian information.
\newblock \emph{Mathematical Programming}, pages 1--36, 2017.

\end{thebibliography}
\end{document}